\def\ourtitle{As you like it: Localization via paired comparisons}
\def\abstracttext{Suppose that we wish to estimate a vector $\x$ from a set of binary paired
comparisons of the form ``$\x$ is closer to $\p$ than to $\q$'' for various
choices of vectors $\p$ and $\q$. The problem of estimating $\x$ from this
type of observation arises in a variety of contexts, including nonmetric
multidimensional scaling, ``unfolding,'' and ranking problems, often because
it provides a powerful and flexible model of preference.  We describe
theoretical bounds for how well we can expect to estimate $\x$ under a
randomized model for $\p$ and $\q$. We also present results for the case where
the comparisons are noisy and subject to some degree of error.  Additionally,
we show that under a randomized model for $\p$ and $\q$, a suitable number of
binary paired comparisons yield a stable embedding of the space of target
vectors.  Finally, we also show that we can achieve significant gains
by adaptively changing the distribution used for choosing $\p$ and $\q$.
}
\definecolor{darkred}{RGB}{100,0,0}
\definecolor{darkgreen}{RGB}{0,100,0}
\definecolor{darkblue}{RGB}{0,0,150}
\newtheorem{theorem}{Theorem}
\newtheorem{lemma}{Lemma}
\newtheorem{proposition}{Proposition}
\newcommand{\R}{\mathbb{R}}
\renewcommand{\vec}[1]{\mathbf{#1}}  \newcommand{\mat}[1]{\mathbf{#1}}  
\newcommand{\norm}[1]{\left \|#1\right \|}
\newcommand{\twonorm}[1]{\norm{#1}}  
\newcommand{\der}{\,\mathrm{d}}
\newcommand{\pr}{\mathbb{P}}
\newcommand{\pd}[2]{\frac{\partial #1}{\partial #2}}
\DeclareMathOperator{\sign}{sign}
\DeclareMathOperator*{\argmin}{arg\,min}
\DeclarePairedDelimiter{\ceil}{\lceil}{\rceil}
\DeclarePairedDelimiter{\flr}{\lfloor}{\rfloor}
\DeclareMathOperator{\E}{\mathbb{E}}
\DeclareMathOperator{\Vol}{Vol}
\DeclareMathOperator{\Diam}{Diam}
\newcommand{\bal}{\!\begin{aligned}}
\newcommand{\eal}{\end{aligned}}
\def\N{\mathcal{N}}
\def\distrib#1{\mathbb{\1}}
\def\bzero{\boldsymbol{0}}
\def\half{\frac{1}{2}}
\def\cA{\mathcal{A}}\def\cN{\mathcal{N}}
\def\a{\vec{a}}\def\b{\vec{b}}
\def\p{\vec{p}}\def\q{\vec{q}}
\def\x{\vec{x}}\def\y{\vec{y}}
\def\w{\vec{w}}\def\z{\vec{z}}
\def\u{\vec{u}}\def\v{\vec{v}}
\def\S{\mathbb{S}}  \def\B{\mathbb{B}}  \def\vx{\vec{x}}
\newcommand{\acks}[1]{\section*{Acknowledgments}
    #1
}
\begin{document}

\title{\ourtitle}

\def\massemail{\href{mailto:massimino@gatech.edu}{massimino@gatech.edu}}
\def\mdavemail{\href{mailto:mdav@gatech.edu}{mdav@gatech.edu}}

\author{Andrew~K.~Massimino and Mark~A.~Davenport}

\maketitle

\begin{abstract}\abstracttext
\end{abstract}

\section{Introduction}\label{sec:intro}

In this paper we consider the problem of determining the location of a point
in Euclidean space based on distance comparisons to a set of known points,
where our observations are nonmetric. In particular, let $\x\in\R^n$ be the true position of
the point that we are trying to estimate, and let
$(\p_1,\q_1), \ldots, (\p_m, \q_m)$ be pairs of ``landmark'' points in $\R^n$ which we assume to be known {\em a priori}.
  Rather than directly observing the raw distances from $\x$, i.e.,
$\|\x-\p_i\|$ and $\|\x-\q_i\|$, we instead obtain only paired comparisons of
the form $\|\x-\p_i\|<\|\x-\q_i\|$.  Our goal is to estimate $\x$ from a
set of such inequalities.  Nonmetric observations of this type arise in
numerous applications and
have seen considerable interest in recent literature
e.g., \cite{ailon2011activeranking, Davenport13, eriksson2013learning, shah2016estimation}.
These methods are often applied in situations where we have a collection of
items and hypothesize that it is possible to embed the items in $\R^n$ in such
a way that the Euclidean distance between points corresponds to their
``dissimilarity,'' with small distances corresponding to similar items.

Here, we focus on the sub-problem of adding a new point to a
known (or previously learned) configuration of landmark points.

As a motivating example, we consider the problem of estimating a
user's preferences from limited response data.
This is useful, for instance, in recommender systems,
information retrieval, targeted advertising, and psychological studies.
A common and intuitively appealing way to model preferences
is via the \emph{ideal point model}, which supposes preference for a particular
item varies inversely with Euclidean distance in a feature
space~\cite{coombs1950psychological}.
We assume that the items to be rated are represented by
points $\p_i$ and $\q_i$ in an $n$-dimensional Euclidean space.
A user's preference is
modeled as an additional point $\x$
in this space (called the individual's
``ideal point'').  This represents a hypothetical ``perfect'' item
satisfying all of the user's criteria for evaluating items.

Using response data consisting of paired comparisons between items
(e.g., ``user $\x$ prefers item $\p_i$ to item $\q_i$'')
is a natural approach when
dealing with human subjects since it avoids requiring people to assign precise
numerical scores to different items, which is generally a quite difficult
task, especially when preferences may depend on multiple
factors~\cite{miller1956magical}.  In contrast, human subjects often find
pairwise judgements much easier to make~\cite{david1963method}.
Data consisting of paired comparisons is often generated implicitly in
contexts where the user has the option to act on two (or more) alternatives;
for instance they may choose to watch a particular movie, or click a
particular advertisement, out of those displayed to
them~\cite{radlinski2007active}. In such contexts, the ``true distances'' in
the ideal point model's preference space are generally inaccessible directly,
but it is nevertheless still possible to obtain an estimate of
a user's ideal point.

\begin{figure}[tbp]\centering
    \includegraphics[width=2in]{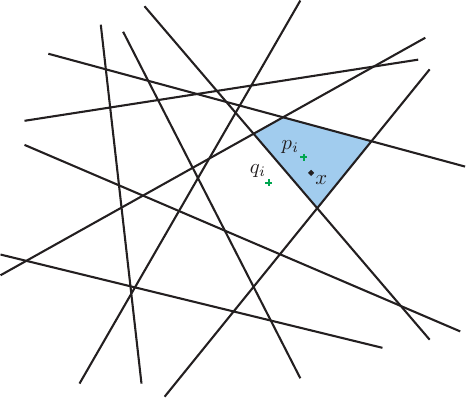}
    \caption{An illustration of the localization problem from paired
    comparisons. The information that $\x$ is closer to $\p_i$ than $\q_i$
tells us which side of a hyperplane $\x$ lies.  Through many such comparisons
we can hope to localize $\x$ to a high degree of accuracy.}
    \label{fig:hyper}
\end{figure}

\subsection{Main results}

The fundamental question which interests us in this paper is how many
comparisons we need (and how should we choose them) to estimate $\x$ to a
desired degree of accuracy. Thus, we consider the case where we are given an
existing embedding of the items (as in a mature recommender system) and focus
on the on-line problem of locating a single new user from their feedback
(consisting of binary data generated from paired comparisons). The item
embedding could be generated using various methods, such as
multidimensional scaling applied to a set of item features, or even using the
results of previous paired comparisons via an approach like that
in~\cite{AgarwWCLKB_Generalized}. Given such an embedding of $\ell$ items,
there are a total of  $\binom{\ell}{2} = \Theta(\ell^2)$ possible paired
comparisons. Clearly, in a system with thousands (or more) items, it will be
prohibitive to acquire this many comparisons as a typical user will likely
only provide comparisons for a handful of items.
Fortunately, in general we can expect that many, if not most, of the possible
comparisons are actually redundant.  For example, of the comparisons
illustrated in Fig.~\ref{fig:hyper}, all but four are redundant and---at least in the absence of noise---add no
additional information.

Any precise answer to this question would depend on the underlying
geometry of the item embedding.
Each comparison essentially divides $\R^n$ in two, indicating
on which side of a hyperplane $\x$ lies,
and some arrangements of hyperplanes will yield better
tessellations of the preference space than others.
Thus, to gain some intuition on this problem without reference to the geometry
of a particular embedding, we will instead consider a probabilistic model
where the items are generated at random from a particular distribution.  In
this case we show that under certain natural assumptions on the distribution,
it is possible to estimate the location of any $\x$ to within an error of
$\epsilon$ using a number of comparisons which, up to log factors, is
proportional to $n/\epsilon$. This is
essentially optimal, so that no set of comparisons can provide a uniform
guarantee with significantly fewer comparisons.  We then describe several stability and robustness guarantees for various settings in which the comparisons are subject to noise or errors.
Finally, we then describe a simple
extension to an {\em adaptive} scheme where we
adaptively select the comparisons (manifested here in adaptively altering the
mean and variance of the distribution generating the items) to
substantially reduce the required number of comparisons.

\subsection{Related work}

It is important to note that the ideal point model, while similar, is distinct
from the low-rank model used in \emph{matrix completion}~\cite{rennie2005fast,candes2009exact}.  Although
both models suppose user choices are guided by a number of attributes, the
ideal point model leads to preferences that are {\em non-monotonic} functions
of those attributes. The ideal point model suggests that each
feature has an ideal level; too much of
a feature can be just as undesirable as too little.  It is not
possible to obtain this kind of performance with a traditional low-rank model,
though if points are limited to the sphere, then the ideal point model
can duplicate the performance of a low-rank factorization.
There is also empirical evidence
that the ideal point model captures behavior more accurately than
factorization based approaches do~\cite{dubois1975ideal,maydeu2009modeling}.

There is a large body of work that
studies the problem of learning to rank items from various sources of data,
including paired comparisons of the sort we consider in this paper.  See, for
example,~\cite{activeranking,jamieson2011low,wauthier2013efficient} and
references therein.  We first note that in most work on rankings, the central
focus is on learning a correct rank-ordered list for a particular user,
without providing any guarantees on recovering a correct parameterization for
the user's preferences as we do here.  While these two problems are
related, there are natural settings where it might be desirable to guarantee
an accurate recovery of the underlying parameterization ($\x$ in our
model). For example, one could exploit these guarantees in the context of an
iterative algorithm for nonmetric multidimensional scaling which aims to
refine the underlying embedding by updating each user and item one at
a time \citep[e.g., see][]{OShaughnessy}, in
which case an understanding of the error in the estimate of $\x$ is crucial.
Moreover, we believe that our approach provides an interesting
alternative perspective as it yields natural
robustness guarantees and suggests simple adaptive schemes.

Also closely related is the work
in~\cite{LuN_Individualized,ParkNZSD_Preference,OhTX_Collaboratively} which
consider paired comparisons and more general ordinal measurements in the
similar (but as discussed above, subtly different) context of low-rank
factorizations.
Perhaps most closely related to our work is that
of~\cite{activeranking}, which examines the problem of
learning a rank ordering using the same ideal point model considered in this
paper. The message in this work is broadly consistent with ours, in that the
number of comparisons required should scale with the dimension of the
preference space (not the total number of items) and can be significantly
improved via a clever adaptive scheme.
However, this work does not bound the estimation error in terms of the
Euclidean distance, which is our central concern.
\cite{jamieson2011low} also incorporates adaptivity, but
seeks to embed a set of points in Euclidean space (as opposed to
estimating a
single user's ideal point) and relies on paired comparisons involving three arbitrarily selected points (rather than a user's ideal point and two items).
Our dyadic adaptive strategy is also similar
    in spirit to a higher-dimensional form of binary search, as in
    \cite{nowak2009noisy}, however here we consider estimating a
    continuous ideal point $\x$ rather than choosing from a finite set of
    possible hypotheses.

Constructing an embedding of items given comparison measurements
    is sometimes referred to as \emph{ordinal embedding}
    and is studied in
    \cite{kleindessner2014uniqueness}, \cite{jain2016finite}, and
    \cite{arias2017some}.
    As mentioned, in this work we assume the presence of an
    item embedding created by e.g., those methods.
Our work could then be used to create a corresponding embedding of
users based on response data to perform e.g., personalization and customer
segmentation.

Finally, while seemingly unrelated, we note that our work builds on the
growing body of literature of 1-bit compressive sensing.  In particular, our
results are largely inspired by those
in~\cite{onebitnormest-prepr,expdecay-prepr}, and borrow techniques
from~\cite{onebitstable} in the proofs of some of our main results.
Our embedding result of Section~\ref{sec:concentrate}
    is most directly related to the work of~\cite{plan2014dimension},
    which studies a similar problem to~\eqref{eq:measmodel0} but under a
    different, non-pairwise model.
    Because of this distinction, their results are not directly
    applicable.  However, due to our particular probabilistic model,
    our results are also in some sense stronger.
    We will expand further with a direct comparison to this work during
    the presentation of our result.
    The 1-bit sensing problem is also considered in \cite{plan2013robust}
    and \cite{plan2013one} but these works handle only queries involving
    homogeneous hyperplanes, without offset, which is not applicable to
    our pairwise setting.
    Ideas similar to 1-bit sensing appear in field of 
    locality sensitive hashing, e.g, \cite{konoshima2012hyperplane} which
    does treat inhomogeneous hyperplanes but does not provide theory.

Note that in this work we extend preliminary results first presented in~\cite{MassiD_Binary,MassiD_Geometry}.

 
\section{A randomized observation model}\label{sec:obsmodel}

For the moment we will consider the ``noise-free'' setting where each comparison between $\x$ and $\q_i$ versus $\p_i$ results in assigning the point which is truly closest to $\x$ with probability 1.
In this case we can represent the observed comparisons mathematically by letting $\cA_i(\x)$ denote the
$i^{\text{th}}$ observation, which consists of comparisons between $\p_i$ and
$\q_i$, and setting
\begin{equation} \label{eq:measmodel00}
 \cA_i(\x)  := \sign \left( \twonorm{ \x- \q_i}^2 -
    \twonorm{ \x - \p_i}^2 \right)
    = \begin{cases} +1 & \text{if $\x$ is
closer to $\p_i$} \\ -1 & \text{if $\x$ is closer to $\q_i$}. \end{cases}
\end{equation}
We will also use $\cA(\x) := [\cA_1(\x),\cdots,\cA_m(\x)]^T$
to denote the vector of all observations resulting from $m$ comparisons.
Note that since
\[
\twonorm{ \x - \q_i }^2 - \twonorm{ \x - \p_i}^2 = 2(\p_i - \q_i)^T \x  + \twonorm{\q_i}^2
    - \twonorm{\p_i}^2,
\]
if we set $\check\a_i = (\p_i - \q_i)$ and
$\check\tau_i = \frac12 (\twonorm{\p_i}^2 - \twonorm{\q_i}^2)$, then we can
re-write our observation model as
\begin{equation} \label{eq:measmodel0}
    \cA_i(\x) = \sign \left( 2\check\a_i^T \x - 2\check\tau_i
\right) = \sign \left( \check\a_i^T \x - \check\tau_i \right).
\end{equation}
This is reminiscent of
the standard setup in one-bit compressive sensing (with
dithers)~\cite{onebitnormest-prepr,expdecay-prepr}
with the
important differences that: {\em (i)} we have not made any kind of
sparsity or other structural assumption on $\x$ and, {\em (ii)} the
``dithers'' $\check\tau_i$, at least in this formulation, are dependent on the
$\check\a_i$, which results in difficulty applying standard results from
this theory to the present setting.

However, many of the techniques from this literature will nevertheless be
helpful in analyzing this problem. To see this, we consider a randomized
observation model where the
pairs $(\p_i, \q_i)$ are chosen independently with i.i.d.\ entries drawn
according to a normal distribution, i.e.,  $\p_i, \q_i \sim \cN(\bzero,
    \sigma^2 \mat I)$.  In this case, we have that the entries of our sensing
    vectors are i.i.d.\ with $\check a_{i}(j) \sim \cN(0,2\sigma^2)$.
    Moreover, if we
    define $\b_i = \p_i  + \q_i$, then we also have that
    $\b_i \sim \cN(0,
        2\sigma^2 \mat I)$, and
\[\bal \frac12 \check\a_i^T \b_i &= \frac12 \sum_{j}
    (\p_i(j) - \q_i(j))(\p_i(j) + \q_i(j))
    \\ &= \frac12 \sum_{j}  \p_i(j)^2 - \q_i(j)^2
    = \frac12 (\twonorm{\p_i}^2 - \twonorm{\q_i}^2) = \check\tau_i.
\eal\]
Note
that while $\check\tau_i = \frac12 \check\a_i^T \b_i$
is clearly dependent on $\check\a_i$, we
do have that $\check\a_i$ and $\b_i$ are independent.

To simplify, we re-normalize by dividing by
$\twonorm{\check\a_i}$, i.e., setting $\a_i
    := \check\a_i/\twonorm{\check\a_i}$ and
$\tau_i := \check\tau_i/\twonorm{\check\a_i}$, in which case we can write
\begin{equation} \label{eq:measmodel}
\cA_i(\x) = \sign ( \a_i^T \x
    - \tau_i ).
\end{equation}
It is easy to see that $\a_i$ is distributed
uniformly on the sphere $\S^{n-1} = \{\a\in\R^n : \twonorm{\a}=1\}$.  Note that throughout our analysis we will exploit the fact that $\a_i$ is uniform on $\S^{n-1}$ and will let $\nu$ denote the uniform measure on the sphere. Note also that
\[ \tau_i = \frac12 \a_i^T \b_i.  \] Since $\check\a_i$ and $\b_i$ are
independent, $\a_i$ and $\b_i$ are also independent.  Moreover, for any
unit-vector $\a_i$, if  $\b_i \sim \cN(0, 2\sigma^2 \mat I)$ then
$\a_i^T \b_i \sim \cN(0, 2\sigma^2)$.  Thus, we must have $\tau_i
\sim \cN(0, \sigma^2/2)$, independent of $\a_i$, which is the key insight that enables the analysis below.

 
\section{Guarantees in the noise-free setting}\label{sec:noiseless}

We now state  a result concerning localization under the noise-free
random model from Section~\ref{sec:obsmodel}.
Extensions to noisy comparisons and efficient estimation
in practice are discussed in Sections \ref{sec:noise} and
\ref{sec:reconstr}, respectively.
Let $\B^n_R$ denote the $n$-dimensional, radius $R$ Euclidean ball.

\begin{theorem}
\label{thm:rndmeasmain}
Let $\epsilon, \eta > 0$ be given. Let $\cA_i(\cdot)$ be defined as in~\eqref{eq:measmodel00}, and suppose that $m$ pairs $\{(\p_i, \q_i)\}_{i=1}^m$
are generated by drawing each $\p_i$ and $\q_i$
independently from $\N(0,\sigma^2 I)$ where $\sigma^2 = 2R^2/n$.
There exists a constant $C$ such that if
\begin{equation}\label{eq:sufficientm}
    m \ge C \frac{R}{\epsilon}
    \left( n\log\frac{R \sqrt{n}}{\epsilon}
        + \log\frac{1}{\eta} \right),
\end{equation}
then with probability at least $1-\eta$, for all $\x,\,\y\in
\B_R^n$ such that $\cA(\x) = \cA(\y)$,
\[
    \norm{\x-\y} \le \epsilon.
\]
\end{theorem}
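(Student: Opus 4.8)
The plan is to prove the contrapositive in a uniform (over all pairs) fashion: with probability at least $1-\eta$, every pair $\x,\y\in\B_R^n$ with $\norm{\x-\y}>\epsilon$ is separated by at least one of the $m$ comparison hyperplanes, so that $\cA(\x)\neq\cA(\y)$. The engine is the reduction from Section~\ref{sec:obsmodel}: each comparison has the form $\cA_i(\x)=\sign(\a_i^T\x-\tau_i)$ with $\a_i$ uniform on $\S^{n-1}$ and $\tau_i\sim\N(0,\sigma^2/2)=\N(0,R^2/n)$ independent of $\a_i$. First I would establish two elementary probabilistic estimates. Since $\cA_i(\x)\neq\cA_i(\y)$ exactly when $\tau_i$ falls between the projections $\a_i^T\x$ and $\a_i^T\y$, conditioning on $\a_i$ and integrating the Gaussian density of $\tau_i$ over the interval of length $\abs{\a_i^T(\x-\y)}$ yields a separation probability $p(\x,\y):=\Prob{\cA_i(\x)\neq\cA_i(\y)}$ with $p(\x,\y)\ge c_1\,\norm{\x-\y}/R$ in the relevant regime (a matching upper bound $p(\x,\y)\le c_2\,\norm{\x-\y}/R$ follows by bounding the density by its maximum, though it is not strictly needed here). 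The lower bound comes from restricting to a cone of constant measure on which both projections land within $O(R/\sqrt n)$ of the origin, where the density of $\tau_i$ is of order $\sqrt n/R$ while the interval length is of order $\norm{\x-\y}/\sqrt n$; the two factors of $\sqrt n$ cancel.

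The difficulty is that $\x\mapsto\cA(\x)$ is discontinuous, so I cannot simply replace $\x,\y$ by nearby net points and conclude $\cA(\x)=\cA(\x')$. Instead I work with the Hamming distance $d_H(\x,\y):=\#\{i:\cA_i(\x)\neq\cA_i(\y)\}$ and note that a hyperplane can separate two points at distance at most $\delta$ only if it lies within distance $\delta$ of $\x$, i.e.\ $\abs{\a_i^T\x-\tau_i}\le\delta$. Hence, writing $M(\x):=\#\{i:\abs{\a_i^T\x-\tau_i}\le\delta\}$ for the number of hyperplanes in the $\delta$-slab about $\x$, we have $d_H(\x,\x')\le M(\x)$ for every $\x'$ with $\norm{\x-\x'}\le\delta$. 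A single-point computation gives $\Prob{\abs{\a_i^T\x-\tau_i}\le\delta}\le c_3\sqrt n\,\delta/R$, so the expected slab count carries an extra factor of $\sqrt n$ relative to the directional separation probability. This mismatch is precisely why a coarse net fails, and it forces the net resolution to be $\delta\asymp\epsilon/\sqrt n$; the resulting net $\mathcal{G}$ of $\B_R^n$ then has $\log\abs{\mathcal G}=O\big(n\log(R\sqrt n/\epsilon)\big)$, which is exactly the dimensional term in~\eqref{eq:sufficientm}.

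With $\delta\asymp\epsilon/\sqrt n$ fixed, I would close the argument with two Chernoff bounds over the net. For separation: for each pair $\x',\y'\in\mathcal G$ with $\norm{\x'-\y'}>\epsilon/2$, the count $d_H(\x',\y')$ is a sum of $m$ independent Bernoulli variables, each with success probability $p(\x',\y')$, hence of mean at least $c_1m\epsilon/(2R)=:2T$; the lower Chernoff tail gives $d_H(\x',\y')\ge T$ except with probability $e^{-c\,m\epsilon/R}$, and a union bound over the at most $\abs{\mathcal G}^2$ such pairs costs $2\log\abs{\mathcal G}$. For discretization: choosing the constant in $\delta\asymp\epsilon/\sqrt n$ large makes $\E M(\x)$ a small fraction of $T$, and—after passing to a companion net with a $2\delta$-slab, so that the continuum maximum is dominated by finitely many points—the upper Chernoff tail gives $M(\x)\le T/4$ uniformly, again at a cost of $\log\abs{\mathcal G}$ in the exponent. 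Combining via $d_H(\x,\y)\ge d_H(\x',\y')-d_H(\x,\x')-d_H(\y,\y')\ge T-2\cdot(T/4)=T/2\ge 1$ for net approximants $\x',\y'$ of any far-apart $\x,\y$ shows $\cA(\x)\neq\cA(\y)$. Requiring both failure probabilities to sum to at most $\eta$ gives $m\gtrsim \tfrac{R}{\epsilon}\big(n\log(R\sqrt n/\epsilon)+\log(1/\eta)\big)$, matching~\eqref{eq:sufficientm}.

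The step I expect to be the crux is the uniform control of $\max_{\x\in\B_R^n}M(\x)$: bounding the number of near hyperplanes simultaneously for all points rather than for a fixed one. The clean resolution is the slab-inflation trick (replace $\delta$ by $2\delta$ and discretize), but tuning the constants so that the slab count stays below a fraction of the separation count is delicate, and it is exactly this $\sqrt n$ gap between slab membership and directional separation that both dictates the fine net scale $\delta\asymp\epsilon/\sqrt n$ and produces the $\log(R\sqrt n/\epsilon)$ factor in the final bound.
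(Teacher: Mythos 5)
Your proof is correct, but it takes a genuinely different route from the paper's. The paper handles the discontinuity of $\x \mapsto \cA(\x)$ \emph{inside} the single-comparison event: Lemma~\ref{lem:singlemeas} lower-bounds the probability that one random hyperplane separates the entire $\delta$-balls $B_\delta(\w)$ and $B_\delta(\z)$ around two net points, so a far-apart pair survives only if \emph{none} of the $m$ hyperplanes separates the corresponding balls, an event of probability at most $(1-P_{\text{sep}})^m$; a union bound over net pairs finishes the proof, and only a single separating hyperplane is ever required. You instead use point-separation on the net and repair the discontinuity combinatorially: a comparison can disagree between $\x$ and its net approximant $\x'$ only if its hyperplane lies in a $\delta$-slab, so the triangle inequality for Hamming counts plus two Chernoff bounds (a lower tail forcing $\gtrsim m\epsilon/R$ separations for far net pairs, an upper tail capping slab counts) closes the argument. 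Note that your single-point estimate $p(\x,\y) \ge c_1 \norm{\x - \y}/R$ is exactly the paper's Lemma~\ref{lem:singlemeas} with $\delta = 0$, and both proofs are forced to the same net scale $\delta \asymp \epsilon/\sqrt{n}$ by the same $\sqrt{n}$ mismatch you identify: in the paper it appears as the $-\delta\sqrt{2n}$ correction in Lemma~\ref{lem:singlemeas}, and your slab probability $\asymp \sqrt{n}\,\delta/R$ is precisely the second term in Lemma~\ref{lem:P0}. The trade-off: the paper's route is leaner (no concentration inequalities, and one hyperplane suffices), while yours proves strictly more along the way --- far-apart points must disagree in a constant \emph{fraction} $\asymp m\epsilon/R$ of comparisons, not merely one, which is essentially the lower half of the stable-embedding Theorem~\ref{thm:embedmain} (proved in the paper separately via Lemmas~\ref{lemma:concentrate} and~\ref{lem:P0}, though there again via ball-separation events rather than slab counts). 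So your argument buys robustness to a constant fraction of comparison errors essentially for free, at the price of extra machinery and worse constants. One simplification you could make: a hyperplane separating $\x$ from $\x'$ with $\norm{\x-\x'}\le\delta$ necessarily has $|\a_i^T\x' - \tau_i| \le \delta$, i.e., it lies in the $\delta$-slab \emph{of the net point itself}, so slab counts need only be controlled at the finitely many net points and your companion-net/$2\delta$-inflation step is not actually needed.
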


The result follows from applying Lemma~\ref{lem:singlemeas} below
to pairs of points in a covering set of $\B_R^n$.
The key message of this theorem is that
if one chooses the variance $\sigma^2$ of the distribution generating the
items appropriately, then it is possible to estimate $\x$ to within $\epsilon$
using a number of comparisons that is nearly linear in $n/\epsilon$.
As we will show in Theorem~\ref{thm:lowerbound},
    this result is optimal, ignoring log factors, in terms of the 
    scaling of $m$ with $n$ and $\epsilon$.  This also
    makes intuitive sense; if all the hyperplanes were all axis-aligned,
    one would require the number of hyperplanes be at least
    proportional to the number of dimensions, $n$, otherwise some
    direction would be unconstrained.
    Theorem~\ref{thm:rndmeasmain} is also sensible
    in terms of the ratio of the initial uncertainty $R$ to the target
    uncertainty $\epsilon$ since $R/\epsilon$ hyperplanes would be
    required to uniformly localize a point along a single dimension.

A natural question is what would happen with a different choice of $\sigma^2$.
In fact, this assumption is critical---if $\sigma^2$ is substantially smaller the bound quickly becomes vacuous, and as $\sigma^2$ grows much past $R^2/n$ the bound begins to become steadily worse.\footnote{We note that it is possible to try to optimize $\sigma^2$ by setting $\sigma^2 = c R^2/n$ for some constant $c$ and then selecting $c$ so as to minimize the constant $C$ in~\eqref{eq:sufficientm}. We believe this would yield limited insight since, in order to obtain a result which is valid uniformly for all possible $n$, we use certain bounds which for general $n$ can be somewhat loose and would skew the resulting $c$. We instead simply select $c = 2$ for simplicity in our analysis (as it results in $\tau_i \sim \mathcal{N}(0,R^2/n)$) and because it aligns well with simulations.}  As we will see in Section~\ref{sec:experiments}, this is in fact observed in practice.  It should also be somewhat intuitive: if $\sigma^2$ is too small, then nearly all the hyperplanes induced by the comparisons will pass very close to the origin, so that accurate estimation of even $\| \x \|$ becomes impossible.  On the other hand, if $\sigma^2$ is too large, then an increasing number of these hyperplanes will not even intersect the ball of radius $R$ in which $\x$ is presumed to lie, thus yielding no new information.

\begin{lemma}
\label{lem:singlemeas}
Let $\w,\z\in \B^n_R$ be distinct and fixed, and let $\delta > 0$ be given.
Define
\[
    B_\delta(\w) := \{ \u \in\B_R^n : \norm{\u-\w}\le\delta \}.
\]
Let $\cA_i$ be defined as in Theorem~\ref{thm:rndmeasmain}. Denote by $P_{\text{{\em sep}}}$ the probability that $B_\delta(\w)$ and $B_\delta(\z)$ are separated by hyperplane $i$, i.e.,
\[ P_{\text{{\em sep}}} := \pr\left[
    \forall \u\in B_\delta(\w),\forall \v\in
    B_\delta(\z) :
    \cA_i(\u) \ne \cA_i(\v)
    \right].
\]
For any $\epsilon_0 \le \norm{\w-\z}$ we have
\[
    P_{\text{{\em sep}}}
    \ge \frac{\epsilon_0-\delta\sqrt{2n}}{22 \sqrt{\pi}e^{5/2} R}.
\]
\end{lemma}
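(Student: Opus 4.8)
The plan is to work with the hyperplane description of the observations derived in Section~\ref{sec:obsmodel}: $\cA_i(\u)=\sign(\a_i^T\u-\tau_i)$, where $\a_i$ is uniform on $\S^{n-1}$ and, independently, $\tau_i\sim\N(0,R^2/n)$. Since $\twonorm{\a_i}=1$, projecting onto the direction $\a_i$ shows that $B_\delta(\w)$ lies in the slab $[\a_i^T\w-\delta,\a_i^T\w+\delta]$, and similarly for $B_\delta(\z)$. Because the full $\delta$-balls contain $B_\delta(\w)$ and $B_\delta(\z)$, requiring the hyperplane to separate the full balls is a sufficient condition for separating the original sets, which yields a valid lower bound on $P_{\text{sep}}$ and is cleaner to analyze. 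With this reduction, hyperplane $i$ separates the balls precisely when $\tau_i$ falls strictly between the two slabs; conditioned on $\a_i$, this is the event that $\tau_i$ lands in an interval, lying between $\a_i^T\w$ and $\a_i^T\z$, of length $(|\a_i^T(\z-\w)|-2\delta)_+$.

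I would then condition on $\a_i$ and integrate out $\tau_i$, bounding the conditional probability below by (interval length) $\times$ (minimum of the Gaussian density over the interval). The density of $\N(0,R^2/n)$ is $\tfrac{\sqrt n}{\sqrt{2\pi}R}e^{-t^2n/(2R^2)}$, which is of order $\sqrt n/R$ only for $|t|=O(R/\sqrt n)$. To exploit this I restrict attention to a ``good'' event $G$ on which the interval is contained in $[-a,a]$ with $a=\sqrt5\,R/\sqrt n$, so that the density there is at least $\tfrac{\sqrt n}{\sqrt{2\pi}R}e^{-5/2}$; this is exactly where the factor $e^{5/2}$ enters. Crucially, each $\a_i^T\v$ is the projection of a fixed vector of norm at most $R$ onto a uniformly random direction, so it concentrates at the scale $R/\sqrt n$ no matter where $\w,\z$ sit in $\B_R^n$, and hence $G$ has at least constant probability.

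This reduces the claim to a lower bound on $\ev{\mathbf{1}_G\,(|\a_i^T(\z-\w)|-2\delta)_+}$. Writing $\hat{\u}=(\z-\w)/\norm{\z-\w}$ and $s=\a_i^T\hat{\u}$, we have $|\a_i^T(\z-\w)|=\norm{\z-\w}\,|s|$, and $s$ carries the classical projection density $\propto(1-s^2)^{(n-3)/2}$, concentrated at scale $1/\sqrt n$. The heart of the argument is then a one-dimensional integral: lower-bounding $\int(\norm{\z-\w}\,|s|-2\delta)_+\,\mathrm{d}s$ against the projection density over a window $|s|\lesssim 1/\sqrt n$ on which that density is bounded below. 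Integrating this linear function over the window (and using $\norm{\z-\w}\ge\epsilon_0$) produces exactly the form $\epsilon_0-\delta\sqrt{2n}$: the $\sqrt{2n}$ is the cost, measured in units of the projection scale $1/\sqrt n$, of shaving $\delta$ off each end of the gap, and the numerical factor $22$ collects the constants from the density floor and the window.

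The main obstacle is the tension in this final step between the two competing requirements on $\a_i$: the gap interval must be long (so $|s|$, and hence $|\a_i^T(\z-\w)|$, should be large), yet it must sit near the origin (so $\a_i^T\w$ and $\a_i^T\z$ should both be small). These pull in opposite directions and are coupled through $\a_i$ — indeed the midpoint direction $\tfrac12(\w+\z)$ and the gap direction $\z-\w$ are orthogonal only when $\norm{\w}=\norm{\z}$. Handling this cleanly, e.g.\ by separately budgeting the allowed size of the midpoint projection and of the half-gap (each costing only a constant probability factor) and then bounding the resulting coupled integral over $\S^{n-1}$, is what fixes the constants, and keeping these estimates valid uniformly in $n$ is the delicate part.
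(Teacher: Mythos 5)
Your proposal is correct and follows essentially the same route as the paper's proof: condition on $\a_i$, reduce separation to the event that $\tau_i$ lands in the gap interval of length $\left(|\a_i^T(\w-\z)|-2\delta\right)_+$, lower-bound the Gaussian density of $\tau_i$ by forcing the interval endpoints into $[-\sqrt{5}R/\sqrt{n},\,\sqrt{5}R/\sqrt{n}]$ (exactly the paper's choice $\xi=\sqrt{5}$, the source of the $e^{5/2}$), and handle the coupling between ``gap large'' and ``endpoints small'' by a union bound over spherical caps of constant measure. The only difference is bookkeeping: the paper bounds the spherical integral by (minimum gap length over the cap $C_\alpha$ with $\alpha=\delta+\epsilon/\sqrt{2n}$) times (measure of the good set), whereas you integrate the gap length against the projection density of $\a_i^T(\w-\z)/\norm{\w-\z}$ over a window of width $\asymp 1/\sqrt{n}$ — two packagings of the same estimate yielding the same $\epsilon_0-\delta\sqrt{2n}$ form.
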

\begin{proof}
Let $\epsilon = \norm{\w-\z}$. Here, we denote the normal vector and
threshold of hyperplane $i$ by $\a$ and $\tau$ respectively. It is easy to
show that $P_{\text{sep}}$ can be expressed as
\begin{align}
    P_{\text{sep}} &= \pr\left[ \a^T\z + \delta \le \tau \le \a^T\w-\delta
        ~~\text{or}~~ \a^T\w + \delta \le \tau \le \a^T\z-\delta \right] \notag \\
        & = 2 \pr\left[ \a^T\z + \delta \le \tau \le \a^T\w-\delta \right], \label{eq:prob1}
\end{align}
where the second equality follows from the symmetry of the distributions of $\a$ and $\tau$.

Define $C_\alpha := \{\a \in\S^{n-1} : \a^T(\w-\z) \ge \alpha \}$. Note that the probability in~\eqref{eq:prob1} is zero unless $\a \in C_{2\delta}$.  Thus,
recalling that $\tau_i
\sim \cN(0, \sigma^2/2)$ we have
\begin{align}
    P_{\text{sep}} & = 2 \int_{C_{2\delta}}
    \left|\Phi\left(\frac{\a^T\w-\delta}{\sigma/\sqrt2}\right)
    - \Phi\left(\frac{\a^T\z+\delta}{\sigma/\sqrt2}\right) \right|
    \nu(\mathrm{d}\a) \notag \\
    & \ge 2 \int_{C'}
    \left|\Phi\left(\frac{\a^T\w-\delta}{\sigma/\sqrt2}\right)
    - \Phi\left(\frac{\a^T\z+\delta}{\sigma/\sqrt2}\right) \right|
    \nu(\mathrm{d}\a) \label{eq:prob2}
\end{align}
for any $C' \subseteq C_{2\delta}$.
To obtain a lower bound on~\eqref{eq:prob2}, we will consider a carefully chosen subset
$C' \subseteq C_{2\delta}$ and then simply multiply the area of $C'$ by the minimum value $\gamma$ of the
integrand over that set, yielding a bound of the form
\[
    P_{\text{sep}} \ge 2 \gamma \nu(C').
\]
We construct the set $C'$ as follows. Let $W := \{\a : \a^T\w \le\xi/\sqrt{n}\norm{\w}\}$, $Z := \{\a: \a^T\z \ge -\xi/\sqrt{n}\norm{\z}\}$, and set $C' := C_\alpha \cap W \cap Z$ for some $\alpha \ge 2 \delta$. Note that for any $\a\in C'$, since $\a^T(\w-\z) \ge \alpha \ge 2\delta$,
we have $-R\xi/\sqrt{n} \le \a^T\z+\delta \le \a^T\w-\delta \le R\xi/\sqrt{n}$. Thus, by Lemma~\ref{lem:cdflower},
\[
    \gamma = \inf_{\a\in C'}
    \left|\Phi\left(\frac{\a^T\w-\delta}{\sigma/\sqrt{2}}\right)
    - \Phi\left(\frac{\a^T\z+\delta}{\sigma/\sqrt{2}}\right) \right|
    \ge \frac{\sqrt{2}}{\sigma} (\alpha-2\delta)
    \phi\biggl(\frac{\sqrt{2} R\xi}{\sigma \sqrt{n}}\biggr).
\]
Recall by assumption we have that
$\sigma = \sqrt{2}R/\sqrt{n}$, thus we obtain by
setting $\xi = \sqrt{5}$,
\begin{equation}
    \gamma \ge \frac{\sqrt{n}}{R} (\alpha - 2\delta) \phi(\xi)
    = \frac{ \sqrt{n}(\alpha - 2\delta)}{\sqrt{2\pi}e^{5/2}R}.
    \label{eq:P1_lb1_X}
\end{equation}

Next note that $C' = C_\alpha \cap W \cap Z = C_\alpha\setminus W^c
\setminus Z^c$ is a difference of a set of hyperspherical caps.
To obtain a lower bound on $\nu(C')$ we use the upper and lower bounds on the measure of hyperspherical caps given in
Lemma~2.1 of~\cite{brieden2001deterministic}.

\paragraph{Case $n\ge 6$}
Provided that $\alpha/\epsilon < \sqrt{2/n}$ we can bound
$\nu(C')$ as
\[\begin{aligned}
    \nu(C') &\ge \nu(C_{\alpha}) - \nu(W^c) - \nu(Z^c)
    \ge \frac{1}{12} - 2\frac{1}{2\xi}(1-\xi^2/n)^{(n-1)/2}
    \ge \frac{1}{12} - \frac{1}{\sqrt{5}e^{5/2}},
\end{aligned}\]
where the last inequality follows from the fact that $(1-x/n)^{n-1} \le e^{-x}$ for $n \ge x \ge 2$.
Combining this with lower estimate~\eqref{eq:P1_lb1_X},
\[ P_{\text{sep}} \ge 2 \gamma \nu(C')
    \ge 2 \frac{ \sqrt{n}(\alpha - 2\delta)}{\sqrt{2\pi}e^2 R}
    \frac{1-12e^{-5/2}/\sqrt{5}}{12}.
\]
Setting $\alpha = \delta + \epsilon/\sqrt{2n}$,
since $1-12e^{-5/2}/\sqrt{5} > 5/9$, we have that
\[
    P_{\text{sep}}
    \ge \frac{2\sqrt{n}(\epsilon/\sqrt{2n}-\delta)
    (1-12e^{-5/2}/\sqrt{5})}
    {12\sqrt{2\pi}e^{5/2}R}
    \ge \frac{ \epsilon - \delta \sqrt{2n} }{22\sqrt{\pi}e^{5/2} R}.
\]

Note that this bound holds under the assumption that $\alpha/\epsilon <
\sqrt{2/n}$, which for our choice of $\alpha$ is equivalent to the
assumption that $\epsilon > \delta\sqrt{2n}$.
However, this bound also
holds trivially for all $\epsilon \le \delta\sqrt{2n}$, and thus in fact
holds for all $\epsilon \ge 0$.

\paragraph{Case $n \le 5$}
In this case, note that $\xi/\sqrt{n} \ge 1$, so the sets $W$ and $Z$ are
the entire sphere.  Hence, $\nu(W^c) = \nu(Z^c) = 0$ and
$\nu(C') = \nu(C_\alpha) \ge \frac{1}{12}$.  Thus,
\[
    P_{\text{sep}} \ge 2\gamma\nu(C')
    \ge \frac{\epsilon - \delta\sqrt{2n}}{12\sqrt{\pi}e^{5/2} R}.
\]

We obtain the stated lemma by noting $\epsilon_0 \le \epsilon$.
\end{proof}

\begin{proof}\textbf{of Theorem~\ref{thm:rndmeasmain}\;}
Let $P_{\text{e}}$ denote the probability that there exists some $\x,\y \in \B_R^n$ with $\norm{\x - \y } > \epsilon$ and $\cA(\x) = \cA(\y)$. Our goal is to show that $P_{\text{e}} \le \eta$. Towards this end, let $U$ be a $\delta$-covering set for $\B_R^n$ with
$|U|\le (3R/\delta)^{n}$. By construction, for any $\x,\y \in \B_R^n$, there exist some $\w, \z \in U$ satisfying $\norm{\x-\w}\le\delta$ and $\norm{\y-\z}\le\delta$. In this case, if $\norm{\x - \y } > \epsilon$ then
\[
\norm{\w-\z} \ge \norm{\x-\y}-2\delta > \epsilon - 2\delta.
\]
Our goal is to upper bound the probability that there exists some $\w, \z \in U$ with $\norm{\w - \z} \ge \epsilon_0 = \epsilon - 2\delta$ and $\cA(\u) = \cA(\v)$ for some $\u\in B_\delta(\w)$ and $\v \in B_\delta(\z)$.  Said differently, we would like to bound the probability that there exists a $\w, \z \in U$ with $\norm{\w -\z} \ge \epsilon_0$ for which $B_\delta(\w)$ and $B_\delta(\z)$ are not separated by any of the $m$ hyperplanes.

Let $P_m(\w,\z)$ denote the probability that $B_\delta(\w)$ and $B_\delta(\z)$ are not separated by any of the $m$ hyperplanes for a fixed $\w, \z \in U$ with $\norm{\w - \z} \ge \epsilon_0$. Lemma~\ref{lem:singlemeas} controls this probability for a single hyperplane, yielding a bound of
\[
1 - P_{\text{sep}} \le 1 - \frac{ \epsilon_0  - \delta \sqrt{2 n} }{22 \sqrt{\pi} e^{5/2} R}.
\]
Since the $(\p_i, \q_i)$ are independent, we obtain
\begin{equation} \label{eq:Pmwz}
P_m(\w,\z) \le \left(1 - \frac{ \epsilon_0  - \delta \sqrt{2n} }{22 \sqrt{\pi} e^{5/2} R}\right)^m.
\end{equation}
Since we are interested in the event that there exists {\em any} $\w, \z \in U$ with $\norm{\w -\z} \ge \epsilon_0$ for which $B_\delta(\w)$ and $B_\delta(\z)$ are separated by {\em none} of the $m$ hyperplanes, we use the fact that there are at most $(3R/\delta)^{2n}$ such pairs $\w, \z$ and combine a union bound with~\eqref{eq:Pmwz} to obtain
\begin{equation} \label{eq:Peunion}
P_{\text{e}} \le \left(\frac{3R}{\delta} \right)^{2n} \left(1 - \frac{ \epsilon_0  - \delta \sqrt{2n} }{22 \sqrt{\pi} e^{5/2} R}\right)^m \le \exp \left( 2n \log \frac{3R}{\delta} - \frac{ \left(\epsilon_0  - \delta \sqrt{2n}\right) m }{22 \sqrt{\pi} e^{5/2} R}\right),
\end{equation}
which follows from $(1-x) \le e^{-x}$. Bounding the right-hand side of~\eqref{eq:Peunion} by $\eta$, we obtain
\begin{equation} \label{eq:mbound1}
2n \log \frac{3R}{\delta} - \frac{ \left(\epsilon_0  - \delta \sqrt{2n}\right) m }{22 \sqrt{\pi} e^{5/2} R} \le \log \eta.
\end{equation}
If we now make the substitutions $\epsilon_0 = \epsilon - 2\delta$ and $\delta = \epsilon/(4+ \sqrt{8n})$, then we have that $\epsilon_0 - \delta \sqrt{n} = \epsilon/2$
and thus we can reduce~\eqref{eq:mbound1} to
\begin{equation*}
2n \log \frac{3R(4+\sqrt{8n})}{\epsilon} - \frac{ \epsilon m }{44 \sqrt{\pi} e^{5/2} R} \le \log \eta.
\end{equation*}
By rearranging, we see that this is equivalent to
\begin{equation} \label{eq:mbound2}
    m \ge 44 \sqrt{\pi} e^{5/2} \frac{R}{\epsilon}
    \left( 2n\log\frac{3R(4+\sqrt{8n})}{\epsilon}
    + \log\frac{1}{\eta} \right).
\end{equation}
One can easily show that~\eqref{eq:sufficientm} implies~\eqref{eq:mbound2} for an appropriate choice of $C$.
\end{proof}

 
We now show that the result in Theorem~\ref{thm:rndmeasmain} is optimal in the
sense that {\em any} set of comparisons which can guarantee a uniform recovery
of all $\x \in \B^n_R$ to accuracy $\epsilon$ will require a number of comparisons on the same order as that required in Theorem~\ref{thm:rndmeasmain} (up to log factors).
\begin{theorem}
\label{thm:lowerbound}
For any configuration of $m$ (inhomogeneous)
hyperplanes in $\R^n$ dividing $\B^n_R$ into cells,
if $ m < \frac{2}{e}\frac{R}{\epsilon} n, $
then there exist two points $\x,\y\in \B^n_R$ in the same cell such that
$\norm{\x-\y} \ge \epsilon$.
\end{theorem}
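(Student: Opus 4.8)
The plan is to argue by contraposition: I assume the arrangement is \emph{good}, meaning that within every cell no two points of $\B^n_R$ lie at distance $\ge \epsilon$, and from this I derive the lower bound $m \ge \tfrac{2}{e}\tfrac{R}{\epsilon}\,n$. The whole argument amounts to comparing two counts of the cells of the arrangement: an \emph{upper} bound from the combinatorics of hyperplane arrangements, and a \emph{lower} bound from a volume/packing argument. Matching these two gives the result.

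For the upper bound, I would invoke the classical fact that $m$ affine (inhomogeneous) hyperplanes partition $\R^n$ into at most $\sum_{k=0}^{n}\binom{m}{k}$ regions, with equality in general position. Assuming $m \ge n$ (the regime $m < n$ produces only $2^m$ cells and is handled separately, since there the required inequality is even easier), I would bound this sum using the standard estimate $\sum_{k=0}^{n}\binom{m}{k} \le \bigl(\tfrac{em}{n}\bigr)^{n}$. Hence the number of cells $N$ satisfies $N \le (em/n)^n$.

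For the lower bound, note that the cells cover $\B^n_R$, so the pieces $\{C_j \cap \B^n_R\}$ tile $\B^n_R$ (up to measure zero) and their volumes sum to $\Vol(\B^n_R) = V_n R^n$, where $V_n$ denotes the volume of the unit ball. The good assumption forces each piece to have $\Diam(C_j \cap \B^n_R) < \epsilon$. The key step is then the \emph{isodiametric inequality}: a set of diameter $d$ in $\R^n$ has volume at most that of a Euclidean ball of diameter $d$, i.e.\ $V_n (d/2)^n$. Thus each piece has volume strictly less than $V_n(\epsilon/2)^n$, and summing over the pieces forces $N > V_n R^n / \bigl(V_n (\epsilon/2)^n\bigr) = (2R/\epsilon)^n$. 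Combining the two counts yields $(2R/\epsilon)^n < N \le (em/n)^n$; taking $n$-th roots and rearranging gives $m > \tfrac{2}{e}\tfrac{R}{\epsilon}\,n$, which contradicts the hypothesis and closes the contrapositive.

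The main obstacle is not the structure of the argument but securing the exact constant $2/e$. The factor $1/e$ is dictated by the $(em/n)^n$ bound on the number of arrangement cells, while the factor $2$ comes specifically from the isodiametric inequality: a cruder enclosure of each small-diameter cell (for instance via Jung's theorem, which only gives circumradius $\approx \epsilon/\sqrt{2}$, or the trivial bound giving radius $\epsilon$) would degrade the constant to $\sqrt{2}/e$ or $1/e$. So the care lies entirely in invoking the isodiametric bound rather than a circumradius estimate, together with tidying up the small-$m$ regime where the binomial-sum estimate does not directly apply.
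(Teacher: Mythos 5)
Your proposal is correct and is essentially the paper's own proof: both rest on the cell-count bound $\sum_{i=0}^{n}\binom{m}{i} \le (em/n)^n$ combined with the isodiametric inequality and a volume pigeonhole over the pieces of $\B^n_R$. The only difference is presentational—the paper argues directly that some cell $K_0$ has $\Vol(K_0) \ge \Vol(\B_R^n)/F_n(m)$ and hence $\Diam(K_0) > \epsilon$, whereas you run the same comparison as a contrapositive—and your handling of the $m<n$ regime and the constant $2/e$ matches the paper's reasoning.
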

\begin{proof}
We will use two facts. 
First, the number of cells
(both bounded and unbounded) defined by $m$ hyperplanes in $\R^n$ in general
position\footnote{For non-general position, this is an upper bound~\cite{partitionspace}.} is given by
\begin{equation} \label{eq:Fnbound}
F_n(m) = \sum_{i=0}^n \binom{m}{i} \le
    \left(\frac{em}{n}\right)^n < \left(\frac{2R}{\epsilon}\right)^n,
\end{equation}
where the second inequality follows from the assumption that $m < 2Rn/e\epsilon$.

Second, for any convex set $K$
we have the isodiametric inequality~\cite{giaquinta2010mathematical}:
where $\Diam(K) = \sup_{x,y\in K}\norm{x-y}$,
\begin{equation} \label{eq:isodia}
    \left( \frac{\Diam(K)}{2} \right)^n
    \frac{\pi^{n/2}}{\Gamma(n/2+1)} \ge
    \Vol(K),
\end{equation}
with equality when $K$ is a ball.
Since the entire volume of $\B^n_R$, denoted $\Vol(\B_R^n)$,
is filled by at most $F_n(m)$ non-overlapping
cells, there must exist at least one such cell $K_0$ with
\begin{equation} \label{eq:volbound}
    \Vol(K_0) \ge \frac{\Vol(\B_R^n)}{F_n(m)}
    = \frac{\pi^{n/2}}{\Gamma(n/2+1)} \frac{R^n}{F_n(m)}.
\end{equation}
Combining~\eqref{eq:isodia} with~\eqref{eq:volbound}, we obtain
\[
    \left( \frac{\Diam(K_0)}{2} \right)^n \ge \frac{R^n}{F_n(m)},
\]
which, together with~\eqref{eq:Fnbound}, implies that
\[
\Diam(K_0) \ge \frac{2R}{\sqrt[n]{F_n(m)}} > \epsilon.
\]
Thus there are vectors $\x, \y\in K_0$ such that $\| \x - \y \| > \epsilon$.
\end{proof}

 
\section{Stability in noise}\label{sec:noise}
So far, we have only considered the noise-free case.
In most practical applications, observations may be corrupted by noise.
We consider two scenarios;
in the first,
we make no assumption on the source of the errors and instead
show the paired comparison observations are stable with respect
to Euclidean distance.  That is, two signals that have similar sign
patterns are also nearby (and vice-versa).  One can view this as a
strengthening of the result in Theorem~\ref{thm:rndmeasmain}.
In the second case,
Gaussian noise is added prior
to the $\sign(\cdot)$ function in~\eqref{eq:measmodel}.
This is equivalent to the 
Thurstone model~\cite{thurstone1927law}
with the Probit (normal) link.

The results of this section apply without considering
any particular recovery method or algorithm and relate
to the number of paired comparisons which may be 
flipped due to the noise, not from reconstruction.
We address these concerns by introducing a practical algorithm
and associated guarantees in Section~\ref{sec:reconstr}.

Throughout the following, we denote by $d_H$ the Hamming distance, i.e., $d_H$
counts the fraction of comparisons which differ between two sets of observations, here denoted $\cA(\x)$ and $\cA(\y)$:
\begin{equation}\label{eq:hamming}
    d_H(\cA(\x), \cA(\y))
        := \frac{1}{m}\sum_{i=1}^m \frac{1}{2}|\cA_i(\x) -\cA_i(\y)|.
\end{equation}

 
\subsection{Stable embedding}\label{sec:concentrate}
Here we show that given enough comparisons
there is an approximate embedding of the preference space
into $\{-1,1\}^m$ via our model.
Theorem~\ref{thm:embedmain} states that
if $\x$ and $\y$ are sufficiently close, then the respective comparison
patterns $\cA(\x)$ and $\cA(\y)$ closely align.  In contrast with Theorem~\ref{thm:gaussnoise}, Theorem~\ref{thm:embedmain} is a purely geometric statement which makes no assumptions on any particular noise model.
Note also that Theorem~\ref{thm:embedmain} applies uniformly \emph{for all} $\x$ and $\y$.
\begin{theorem}\label{thm:embedmain}
Let $\eta,\,\zeta>0$ be given. Let $\cA(\x)$ denote the collection of $m$ observations defined as in Theorem~\ref{thm:rndmeasmain}.
There exist constants $C_1,c_1,C_2,c_2$ such that if
\begin{equation} \label{eq:embedm}
m \ge \frac{1}{2\zeta^2}
        \left(2n\log 
            \frac{3\sqrt{n}}{\zeta }
     + \log  \frac{2}{\eta} \right),
\end{equation}
then with probability at least $1-\eta$,
for all $\x,\y\in\B_R^n$ we have
\begin{equation} \label{eq:embedbound}
    C_1 \frac{\norm{\x-\y}}{R}
    - c_1 \zeta
    \le d_H(\cA(\x),\cA(\y))
    \le C_2 \frac{\norm{\x-\y}}{R} + c_2 \zeta.
\end{equation}
\end{theorem}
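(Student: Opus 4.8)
The plan is to prove the two-sided bound in three stages: first compute the expected Hamming distance for a fixed pair $\x,\y$ and sandwich it between constant multiples of $\norm{\x-\y}/R$; then concentrate $d_H$ around this expectation uniformly over a finite net using Hoeffding's inequality and a union bound; and finally transfer the net estimate to arbitrary $\x,\y\in\B_R^n$ while controlling the discretization error.

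For the expectation, observe that comparison $i$ disagrees on $\x$ and $\y$ exactly when the dither $\tau_i$ lies between $\a_i^T\x$ and $\a_i^T\y$. Since $\tau_i\sim\cN(0,R^2/n)$ is independent of $\a_i$ (so that $\sigma/\sqrt2=R/\sqrt n$), this gives $\E\, d_H(\cA(\x),\cA(\y))=\pr[\cA_i(\x)\neq\cA_i(\y)]=\E_{\a}\bigl|\Phi(\sqrt n\,\a^T\x/R)-\Phi(\sqrt n\,\a^T\y/R)\bigr|$. For the upper bound I would apply the mean value theorem together with $\phi\le\phi(0)=1/\sqrt{2\pi}$ and $\E_\a|\a^T(\x-\y)|\le(\E_\a(\a^T(\x-\y))^2)^{1/2}=\norm{\x-\y}/\sqrt n$, obtaining $\E\, d_H\le\norm{\x-\y}/(\sqrt{2\pi}\,R)$. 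For the lower bound I would simply invoke Lemma~\ref{lem:singlemeas} in the limit where the balls shrink to points ($\delta\to 0$): there $P_{\text{sep}}$ is precisely this disagreement probability, so $\E\, d_H\ge\norm{\x-\y}/(22\sqrt\pi e^{5/2}R)$. This yields the constants $C_1,C_2$.

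Next, since $d_H(\cA(\w),\cA(\z))$ is an average of $m$ independent $\{0,1\}$ indicators, Hoeffding's inequality gives $\pr[\,|d_H(\cA(\w),\cA(\z))-\E\, d_H|\ge\zeta\,]\le 2e^{-2m\zeta^2}$. I would then fix a $\delta$-net $U$ of $\B_R^n$ with $\delta=R\zeta/\sqrt n$ and $|U|\le(3\sqrt n/\zeta)^n$, and union bound over the at most $(3\sqrt n/\zeta)^{2n}$ pairs of net points; requiring the total failure probability to be at most $\eta$ reproduces exactly the sample complexity~\eqref{eq:embedm}, since $2m\zeta^2\ge 2n\log(3\sqrt n/\zeta)+\log(2/\eta)$ is the resulting condition.

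The last and most delicate stage is transferring the estimate from net points to arbitrary $\x,\y$. Rounding $\x,\y$ to nearest net points $\w,\z$ and applying the triangle inequality for $d_H$ reduces matters to bounding $d_H(\cA(\x),\cA(\w))$ uniformly over $\x\in B_\delta(\w)$. The obstacle is that this quantity is discontinuous in $\x$, so it cannot be controlled by its expectation alone; the key idea is to dominate it, uniformly over $\x\in B_\delta(\w)$, by the fraction of hyperplanes intersecting $B_\delta(\w)$—a quantity independent of $\x$—since any hyperplane separating $\x$ from $\w$ must cross the ball. The probability a single hyperplane meets $B_\delta(\w)$ is $\pr[|\a^T\w-\tau|\le\delta]\le 2\delta\sqrt n/(\sqrt{2\pi}R)=O(\zeta)$, and Hoeffding plus a union bound over $U$ (a lower-order term than the pair bound) shows this fraction is $O(\zeta)$ uniformly. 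Combining the expectation sandwich, the net concentration, this discretization bound, and $|\norm{\w-\z}-\norm{\x-\y}|\le 2\delta$ with $2\delta/R\le 2\zeta$ then yields~\eqref{eq:embedbound} with appropriately adjusted constants $C_1,c_1,C_2,c_2$. I expect the domination-by-ball-intersection trick to be the crux, as it is precisely what converts a pointwise concentration statement into the required uniform guarantee.
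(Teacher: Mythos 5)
Your proposal is correct, and every step would go through, but the way you obtain uniformity over all of $\B_R^n$ (rather than just the net) is organized differently from the paper, so it is worth comparing. The paper never invokes the triangle inequality for $d_H$: its Lemma~\ref{lemma:concentrate} works directly with pairs of $\delta$-balls, dominating $\sup$ and $\inf$ of $d_H(\cA(\u),\cA(\v))$ over $\u\in B_\delta(\w)$, $\v\in B_\delta(\z)$ by averages of i.i.d.\ indicators of two ball-level events per hyperplane -- complete separation of the two balls (probability $P_1$, bounded below by Lemma~\ref{lem:singlemeas} with its $\delta\sqrt{2n}$ slack) and separation of \emph{some} pair of points (probability $1-P_0$, bounded above by Lemma~\ref{lem:P0}) -- so the discretization error is absorbed into the per-hyperplane probabilities \emph{before} Hoeffding and the union bound over net pairs are applied. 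You instead concentrate the exact disagreement fraction at net pairs, separately concentrate the fraction of hyperplanes meeting each single ball $B_\delta(\w)$ (your domination trick, which is deterministically valid since $\cA_i(\x)\ne\cA_i(\w)$ forces $|\a_i^T\w-\tau_i|\le\delta$), and stitch the pieces together with the triangle inequality for $d_H$. Both arguments rest on the same underlying principle -- replace an uncountable supremum by a ball-level Bernoulli average -- but your decomposition is more modular: it needs only point-separation probabilities (your mean-value-theorem/Cauchy--Schwarz computation replaces Lemma~\ref{lem:P0} and Lemma~\ref{lem:sphereint}, with an even slightly better constant, and the lower bound follows from Lemma~\ref{lem:singlemeas} as $\delta\to 0$, which is legitimate since the ball-separation probability is monotone in $\delta$) together with the elementary hitting estimate. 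The price is an extra union bound over single net points and some constant inflation from the triangle inequality; note that to stay within the stated budget in~\eqref{eq:embedm} you should count \emph{unordered} net pairs, so the total failure probability is at most $(|U|^2+|U|)e^{-2m\zeta^2}\le 2|U|^2e^{-2m\zeta^2}\le\eta$, matching the paper's bookkeeping -- with ordered pairs plus the single-point events your probability would overshoot $\eta$ by a harmless constant factor.
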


This result implies that the fraction of differences in the set of
observed comparisons between $\x$ and $\y$ will be constrained to within a
constant factor of the Euclidean distance, plus an additive error
approximately proportional to $1/\sqrt{m}$.  At first glance, this seems
worse than the result of Theorem~\ref{thm:rndmeasmain}, which suggests the
rate $1/m$.  However, Theorem~\ref{thm:embedmain} comes with much greater
flexibility in that Theorem~\ref{thm:rndmeasmain} only concerns the case
where $d_H(\cA(\x),\cA(\y)) = 0$.  As in Theorem~\ref{thm:rndmeasmain},
this result applies \emph{for all} $\x$ on the same randomly drawn
set of items.

This result is very reminiscent of Theorem~1.10
    in \cite{plan2014dimension} which concerns tessellations
    under uniform random affine hyperplanes generated according 
    to the Haar measure, rather than the particular Gaussian model we
    study.  Compared to that work, our result is much better in terms of
    the scaling of the lower bound on $m$ with respect to distortion
    $\zeta$ (they predict $1/\zeta^{12}$ while ours is $1/\zeta^2$).
    This is due to both our specific probabilistic model 
    and because we do not use the technique of ``lifting'' the problem
    to dimension $n+1$ in our analysis because it would incur additional
    distortion.  On the other hand, the result of \cite{plan2014dimension}
    is applicable to $\x$ lying in an arbitrary convex body
    whereas our result considers only $\x$ within a radius $R$ ball.

Unlike in Theorem~\ref{thm:rndmeasmain}, we are not aware whether the
relationship between $\zeta$ and $m$ given in Theorem~\ref{thm:embedmain}
is optimal.  This result is related to open questions
concerning Dvoretzky's theorem for embedding $\ell_2$ into $\ell_1$.
See the discussion of optimality
in Section~1.7 of \cite{plan2014dimension}
and Remark 1.6 of \cite{plan2013robust}.

In the context of a hypothetical recovery problem, suppose
$\x$ is a parameter of interest and $\y$ is an estimate produced by any
algorithm.  Then, \eqref{eq:embedbound} says that if we want to
recover $\x$ to within error $\epsilon$, the algorithm should
look for vectors $\y$ which have up to $O(\epsilon)$
incorrect comparisons.  Likewise, if a $\y$ can be found having up to
$O(\epsilon)$ comparison errors, we have the same $O(\epsilon)$
guarantee on the Euclidean error of the estimate.
In many cases, such as when errors are generated randomly,
Theorem~\ref{thm:rndmeasmain} would be inappropriate because finding a
$\y$ such that $d_H(\bar{\cA}(\x), \cA(\y)) = 0$ is likely to be
impossible.

To prove Theorem~\ref{thm:embedmain} we will require
the following Lemmas~\ref{lem:P0} and \ref{lemma:concentrate}.

\begin{lemma} \label{lem:P0}
Let $\w,\z\in \B^n_R$ be distinct and fixed, and let $\delta>0$ be given. Let $\cA(\x)$ denote the collection of $m$ observations defined as in Theorem~\ref{thm:rndmeasmain}, and let $B_{\delta}(\cdot)$ be defined as in Lemma~\ref{lem:singlemeas}. Denote by $P_0$ the probability that $B_\delta(\w)$ and $B_\delta(\z)$ are not separated by hyperplane $i$, i.e.,
     \[   P_0 = \pr \left[
            \forall \u\in B_\delta(\w),\forall \v\in
            B_\delta(\z) :
        \cA_i(\u) = \cA_i(\v) \right].
    \]
    Then
    \[
        1- P_0 \le \sqrt{\frac{2}{\pi}} \left( \frac{\norm{\w-\z}}{R} + \frac{\delta \sqrt{n}}{R}\right).
    \]
\end{lemma}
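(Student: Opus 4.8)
The plan is to reduce the geometric event defining $P_0$ to a one-dimensional statement about the random threshold $\tau$, and then to exploit the independence of $\tau$ and $\a$ together with the Gaussianity of $\tau$. First I would observe that, since $\norm{\a}=1$, as $\u$ ranges over $B_\delta(\w)$ the inner product $\a^T\u$ ranges over $[\a^T\w-\delta,\,\a^T\w+\delta]$, and similarly for $\v\in B_\delta(\z)$. The sign $\sign(\a^T\,\cdot\,-\tau)$ is constant across $B_\delta(\w)\cup B_\delta(\z)$ — which is exactly the event defining $P_0$ — precisely when $\tau$ lies above both intervals or below both intervals. Consequently the complementary event, of probability $1-P_0$, is
\[
    \{\,\min(\a^T\w,\a^T\z)-\delta \le \tau \le \max(\a^T\w,\a^T\z)+\delta\,\},
\]
an interval $I_\a$ in $\tau$ of length $|\a^T(\w-\z)|+2\delta$. (One must be slightly careful here: a value of $\tau$ falling in the ``gap'' between the two sub-intervals still produces disagreeing signs on the two balls, so the relevant event is the \emph{full} interval $I_\a$ rather than the union of the two sub-intervals.)

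Next I would condition on $\a$ and use that $\tau$ is independent of $\a$ with $\tau\sim\N(0,R^2/n)$. Since the density of $\N(0,R^2/n)$ is maximized at the origin with value $\sqrt{n}/(R\sqrt{2\pi})$, the probability that $\tau$ lands in any interval of length $|I_\a|$ is at most $|I_\a|\cdot\sqrt{n}/(R\sqrt{2\pi})$. Taking the expectation over $\a$ gives
\[
    1-P_0 \;\le\; \frac{\sqrt{n}}{R\sqrt{2\pi}}\Bigl(\,\mathbb{E}_{\a}\,|\a^T(\w-\z)| + 2\delta\,\Bigr).
\]

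Then I would bound the remaining expectation. Because $\a$ is uniform on $\S^{n-1}$, by rotational symmetry $\a^T(\w-\z)$ is distributed as $\norm{\w-\z}$ times a single coordinate of a uniform unit vector, so $\mathbb{E}[(\a^T(\w-\z))^2]=\norm{\w-\z}^2/n$. Jensen's inequality then yields $\mathbb{E}_{\a}\,|\a^T(\w-\z)|\le\norm{\w-\z}/\sqrt{n}$. Substituting this in and simplifying the constants using the identity $2/\sqrt{2\pi}=\sqrt{2/\pi}$ together with $1/\sqrt{2\pi}\le\sqrt{2/\pi}$ collapses the bound to
\[
    1-P_0 \;\le\; \frac{1}{\sqrt{2\pi}}\frac{\norm{\w-\z}}{R} + \sqrt{\tfrac{2}{\pi}}\,\frac{\delta\sqrt{n}}{R}
    \;\le\; \sqrt{\tfrac{2}{\pi}}\left(\frac{\norm{\w-\z}}{R}+\frac{\delta\sqrt{n}}{R}\right),
\]
which is exactly the claimed estimate. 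The main obstacle is the first step: correctly identifying the separation event with the support-function description of the convex hull (a ``capsule'') of the two balls and, in particular, verifying that the probability of interest is governed by the full interval of length $|\a^T(\w-\z)|+2\delta$. Once this geometric reduction is in place, the density bound and the symmetry computation for $\mathbb{E}_{\a}|\a^T(\w-\z)|$ are routine.
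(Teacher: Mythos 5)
Your proof is correct, and its first half coincides with the paper's: the paper likewise conditions on $\a$, notes (via Cauchy--Schwarz) that the disagreement event forces $\tau$ into the interval $[\min(\a^T\w,\a^T\z)-\delta,\,\max(\a^T\w,\a^T\z)+\delta]$, and bounds the probability of that event by the interval's length times the peak of the $\N(0,R^2/n)$ density (implemented there through Lemma~\ref{lem:cdflower}). Where you genuinely diverge is the treatment of $\E_{\a}\,|\a^T(\w-\z)|$. The paper evaluates this spherical integral exactly in Lemma~\ref{lem:sphereint}, obtaining a ratio of Gamma functions, and then applies the inequality $\Gamma(\tfrac{n}{2})/\Gamma(\tfrac{n+1}{2})\le\sqrt{\pi/n}$. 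You instead use rotational symmetry to compute the second moment, $\E[(\a^T(\w-\z))^2]=\norm{\w-\z}^2/n$, and apply Jensen. Your route is more elementary --- no spherical-coordinate integration and no Gamma-function estimates --- and it actually yields a slightly sharper constant on the $\norm{\w-\z}/R$ term ($1/\sqrt{2\pi}$ rather than $\sqrt{2/\pi}$), which you then relax to the stated form. It also sidesteps a small blemish in the paper: the constant in Lemma~\ref{lem:sphereint} as stated is too large by a factor of $2$ (at $n=2$ it would give $\E|\cos\theta| = 4/\pi>1$, whereas the lemma's own proof yields $\frac{1}{\sqrt{\pi}}\frac{\Gamma(n/2)}{\Gamma((n+1)/2)}$, i.e., $2/\pi$); since that formula is only used as an upper bound, the paper's conclusion is unaffected, but your argument never needs the exact value at all. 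One point to tighten in your write-up: because $B_\delta(\w)$ is the $\delta$-ball intersected with $\B_R^n$, the range of $\a^T\u$ over $B_\delta(\w)$ is only \emph{contained in} (not equal to) $[\a^T\w-\delta,\a^T\w+\delta]$, so your ``precisely when'' should really be an inclusion of events --- which is all the upper bound requires.
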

    \begin{proof}      We need an upper bound on
        \[ 1-P_0 = \pr \left[\cA_i(\u) \ne \cA_i(\v)
                \text{ for some }
                \u\in B_\delta(\w),
            \v\in B_\delta(\z) \right].
        \]
        Suppose for now that $\a$ is fixed and without loss of generality that $\a^T \w > \a^T \z$.  Then this probability is simply
        \begin{align*}
        \pr \left[ \a^T\v < \tau < \a^T \u
            \text{ for some } \u\in B_\delta(\w), \v\in B_\delta(\z) \right] & = \pr \left[ \min_{\v\in B_\delta(\z)} \a^T \v <
                \tau < \max_{\u\in B_\delta(\w)} \a^T \u \right] \\
                & \le \pr \left[ \a^T\z - \delta < \tau < \a^T\w + \delta \right],
        \end{align*}
        since by Cauchy--Schwarz we have
        \[
            \min_{\v\in B_\delta(\z)} \a^T \v \ge \a^T\z - \delta \quad \quad \text{and} \quad \quad
            \max_{\u\in B_\delta(\w)} \a^T \u \le \a^T\w + \delta.
        \]
        Thus, recalling that $\tau_i \sim \cN(0, R^2/n)$, from Lemma~\ref{lem:cdflower} we have
        \begin{align*}
        \pr \left[ \a^T\z - \delta < \tau < \a^T\w + \delta \right] & = \Phi\left(\frac{\a^T\w+\delta}{R/\sqrt{n}}\right)
                - \Phi\left(\frac{\a^T\z-\delta}{R/\sqrt{n}}\right) \\
                & \le \frac{1}{R} \sqrt{\frac{n}{2\pi}} \left(\a^T(\w-\z) + 2 \delta\right).
        \end{align*}
        Similarly, for $\a^T \w < \a^T \z$ we have
        \[
        \pr \left[ \a^T\w - \delta < \tau < \a^T\z + \delta \right] \le \frac{1}{R} \sqrt{\frac{n}{2\pi}} \left(\a^T(\z-\w) + 2 \delta\right).
        \]
        Combining these we have
        \begin{align*}
           1-P_0 & \le \int_{\S^{n-1}} \frac{1}{R} \sqrt{\frac{n}{2\pi}}
                \left( |\a^T(\w-\z)| +2\delta \right)
                \nu(\mathrm{d}\a)
            \\&= \frac{1}{R} \sqrt{\frac{n}{2\pi}} \int_{\S^{n-1}}
                |\a^T(\w-\z)| \,\nu(\mathrm{d}\a)
                 +\frac{2\delta}{R} \sqrt{\frac{n}{2\pi}}
            \\& = \frac{\sqrt{2n}}{R \pi}
                \frac{\Gamma(\frac{n}{2})}{\Gamma(\frac{n+1}{2})}
                \norm{\w-\z} + \frac{\delta}{R} \sqrt{\frac{2n}{\pi}},
        \end{align*}
        where the last equality is proven in Lemma~\ref{lem:sphereint}.
        The lemma then follows from the facts that
        $\frac{\Gamma(1/2)}{\Gamma(1)} = \sqrt{\pi}$ and
        $\frac{\Gamma(\frac{n}{2})}{\Gamma(\frac{n+1}{2})} \le
        \frac{2}{\sqrt{2n-1}} \le \sqrt{\frac{\pi}{n}}$ for $n\ge 2$
        \citep[(2.20)]{qi2010bounds}.
    \end{proof}

\begin{lemma}\label{lemma:concentrate}
Let $\w,\z\in \B_R^n$ be distinct and fixed, and let $\delta, \zeta > 0$ be given.
Let $\cA(\x)$ denote the collection of $m$ observations defined as in Theorem~\ref{thm:rndmeasmain}, and let $B_{\delta}(\cdot)$ be defined as in Lemma~\ref{lem:singlemeas}.
Then for all $\u \in B_{\delta}(\w)$ and $\v \in B_{\delta}(\z)$,
\[
    \frac{1}{22 e^{5/2}\sqrt{\pi}} \left(\frac{\norm{\w-\z}}{R } - \frac{\delta \sqrt{2n}}{R } \right) - \zeta
    \le d_H(\cA(\u),\cA(\v))
 \le \sqrt{\frac{2}{\pi}} \left( \frac{\norm{\w-\z}}{R} +
    \frac{\delta \sqrt{n}}{R} \right) + \zeta,
\]
with probability at least $1-\exp(-2\zeta^2m)$.
\end{lemma}

\begin{proof}Fix $\delta>0$ and let $\u\in B_\delta(\w), \v\in B_\delta(\z)$.
    Recall that the Hamming distance $d_H$ is a sum of independent
    and identically distributed Bernoulli random variables and
    we may bound it using Hoeffding's
    inequality.  Since our probabilistic upper and lower bounds must hold for
    all $\u,\v$ as described above, we introduce
    quantities $L_0$ and $L_1$ which represent two ``extreme cases'' of the
    Bernoulli variables:
    \[\bal
        L_0 & := \sup_{\u\in B_\delta(\w), \v \in B_\delta(\z)}
        \frac{1}{2m}\sum_{i=1}^m|\cA_i(\u)- \cA_i(\v)|
          \\ L_1 & := \inf_{\u\in B_\delta(\w), \v \in B_\delta(\z)}
          \frac{1}{2m}\sum_{i=1}^m|\cA_i(\u)- \cA_i(\v)|.
    \eal\]
    Then we have
    \[
        L_1 \le d_H(\cA(\u), \cA(\v)) \le L_0.
    \]
    Denote $P_0 = 1-\E L_0$ and $P_1 = \E L_1$, i.e.,
    \[\bal
        P_0 &= \pr \left[
            \forall \u\in B_\delta(\w),\forall \v\in
            B_\delta(\z) :
        \cA_i(\u) = \cA_i(\v) \right]
        \\ P_1 &= \pr \left[
            \forall \u\in B_\delta(\w),\forall \v\in
            B_\delta(\z) :
        \cA_i(\u) \ne \cA_i(\v) \right].
    \eal\]
    By Hoeffding's inequality,
    \[\bal
        \pr \left[ L_0 > (1-P_0)+\zeta \right] &\le \exp(-2m\zeta^2)
        \\ \pr \left[ L_1 < P_1-\zeta \right] &\le \exp(-2m\zeta^2).
    \eal\]
    Hence, with probability at least $1-2\exp(-2m\zeta^2)$,
    \[
        P_1 - \zeta \le d_H(\cA(\u), \cA(\v)) \le (1-P_0) + \zeta.
    \]
    The result follows directly from this combined with the facts that from Lemma~\ref{lem:singlemeas} we have
\[
    P_1 \ge
\frac{1}{22 e^{5/2}\sqrt{\pi}} \left(\frac{\norm{\w-\z}}{R} - \frac{\delta \sqrt{2n}}{R }\right),
\]
    and from Lemma~\ref{lem:P0} we have
    \[
        1-P_0 \le  \sqrt{\frac{2}{\pi}} \left( \frac{\norm{\w-\z}}{R} + \frac{\delta \sqrt{n}}{R}\right).
    \]
        ~\vspace{-0.42in}\\\mbox{}
\end{proof}

\begin{proof}\textbf{of Theorem~\ref{thm:embedmain}}\;
By Lemma~\ref{lemma:concentrate}, for any fixed pair $\w,\z\in\B_R^n$
we have
bounds on the Hamming distance that hold
with probability at least $1-2\exp(-2\zeta^2m)$, for all
$\u \in B_{\delta}(\w)$ and $\v \in B_{\delta}(\z)$.
Recall that the radius $R$ ball can be covered with a set $U$ of
radius $\delta$ balls with $|U| \le (3R/\delta)^n$.
Thus, by a union bound
we have that with probability at least $1-2(3R/\delta)^{2n}
\exp(-2\zeta^2m)$,
for {\em any} $\w,\z\in U$,
\[
    \frac{1}{22 e^{5/2}\sqrt{\pi}} \left(\frac{\norm{\w-\z}}{R } - \frac{\delta \sqrt{2n}}{R } \right) - \zeta
    \le d_H(\cA(\u),\cA(\v))
 \le \sqrt{\frac{2}{\pi}} \left( \frac{\norm{\w-\z}}{R} +
    \frac{\delta \sqrt{n}}{R} \right) + \zeta,
\]
for all $\u \in B_{\delta}(\w)$ and $\v \in B_{\delta}(\z)$.
Since $\norm{\x-\y}-2\delta\le\norm{\w-\z}\le\norm{\x-\y}+2\delta$, this implies that
\[
    \frac{1}{22 e^{5/2}\sqrt{\pi}} \left(\frac{\norm{\x-\y}-2\delta}{R } - \frac{\delta \sqrt{2n}}{R } \right) - \zeta
    \le d_H(\cA(\x),\cA(\y))
 \le \sqrt{\frac{2}{\pi}} \left( \frac{\norm{\x-\y}+2\delta}{R} +
    \frac{\delta \sqrt{n}}{R} \right) + \zeta,
\]
Letting $\delta = \zeta R/\sqrt{n}$ and setting $C_1, c_1, C_2, c_1$ appropriately\footnote{We set $C_1 = 1/22 e^{5/2} \sqrt{\pi}$ and $C_2 = \sqrt{2/\pi}$. We may set $c_1 = 1 + 1/11e^{5/2} \sqrt{\pi} + \sqrt{2/\pi}$ and $c_2 = 1 + 3\sqrt{2/\pi}$ to obtain constants that are valid for all $n$---improved values are possible for large $n$.} this reduces to~\eqref{eq:embedbound}.
Lower bounding the probability by $1-\eta$, we obtain
\[
    2(3\sqrt{n}/\zeta)^{2n}\exp(-2\zeta^2m) \le \eta.
\]
Rearranging yields~\eqref{eq:embedm}.
\end{proof}


\subsection{Gaussian noise}\label{sec:gaussnoise}
Here we aim to understand how the paired comparisons change
with the introduction of ``pre-quantization'' Gaussian noise. This will have
the effect of causing some comparisons to be erroneous, where the probability
of an error will be largest when $\x$ is equidistant from $\p_i$ and $\q_i$ and
will decay as $\x$ moves away from this boundary.

Towards this end, recall that the observation model in~\eqref{eq:measmodel00} can be reduced to the form
\begin{equation} \label{eq:measmodelz}
\cA_i(\x) = \sign(q_i) \quad \quad q_i := \a_i^T \x - \tau_i.
\end{equation}
In the noisy case, we will consider the observations
\begin{equation} \label{eq:measmodelzbar}
\bar{\cA}_i(\x) = \sign(\bar{q}_i) \quad \quad \bar{q}_i := \a_i^T \x - \tau_i + z_i = \bar{q}_i + z_i,
\end{equation}
where $z_i \sim\cN(0,\sigma_z^2)$.  Note that since $\norm{\a_i} = 1$, this model is equivalent to adding multivariate
Gaussian noise directly to $\x$ with covariance $\sigma_z^2 \mathbf{I}$.  For a fixed $\x$, we can then quantify the probability that $d_H(\cA(\x),\bar{\cA}(\x))$ is large via our next two results.
    We first consider the case of a single comparison in
    Lemma~\ref{lem:gaussnoise1}, then extend this
    to an arbitrary number of comparisons
    in Theorem~\ref{thm:gaussnoise}.
\begin{lemma}\label{lem:gaussnoise1}
    Suppose $n\ge 4$.
    Then $\pr[ \cA_i(\x) \neq \bar{\cA}_i(\x) ] \le \kappa_n(\sigma_z^2)$
    where $\kappa_n$ is defined by
\begin{equation}\label{eq:kappan}
    \kappa_n(\sigma_z^2)
    := \half\sqrt{\frac{\sigma_z^2}{\sigma_z^2 + 2 R^2/n + 4\norm{\x}^2/n}}
    \text{}\le \half\sqrt{\frac{1}{1+2R^2/(n\sigma_z^2)}}.
\end{equation}
\end{lemma}For clarity, we focus here on the $n\ge 4$ case.
    We consider the $n=2$ and $n=3$ cases separately because when
    $n\ge 4$ the probability distribution function of $\a_i^T\x$
    is well-approximated by a Gaussian function but not for $n < 4$.
    We give alternative expressions for $\kappa_n$ when
    $n=2$ and $n=3$ in Appendix~\ref{sec:noisedetails}.

\begin{theorem}
\label{thm:gaussnoise} Suppose
$n\ge 4$ and fix $\x\in\B^n_R$. Let $\cA(\x)$ and $\bar{\cA}(\x)$ denote the collection of $m$ observations defined as in~\eqref{eq:measmodelz} and~\eqref{eq:measmodelzbar} respectively, where the $\{(\p_i,\q_i)\}_{i=1}^m$ (and hence the $\{(\a_i,\tau_i)\}_{i=1}^m$) are generated as in Theorem~\ref{thm:rndmeasmain}.
Then,
\begin{equation} \label{eq:expdH}
    \E d_H( \cA(\x), \bar{\cA}(\x)) \le \kappa_n(\sigma_z^2)
    \text{}\le \half\sqrt{\frac{1}{1+2R^2/(n\sigma_z^2)}}.
\end{equation}
and
\begin{equation} \label{eq:taildH}
    \pr \left[ d_H( \cA(\x), \bar{\cA}(\x)) \ge \kappa_n(\sigma_z^2)
        + \zeta \right] \le \exp(-2m\zeta^2).
\end{equation}
where $\kappa_n$ is defined in~\eqref{eq:kappan}.
\end{theorem}
\begin{proof}
By Lemma~\ref{lem:gaussnoise1}, we have
that $\pr[ \cA_i(\x) \neq \bar{\cA}_i(\x) ]$ is bounded by
$\kappa_n(\sigma_z^2)$.
Since the comparisons are independent,
the expected number of sign mismatches is just the probability
of a sign flip just computed, which establishes~\eqref{eq:expdH}.
The tail bound in~\eqref{eq:taildH} is a simple consequence of Hoeffding's inequality.
\end{proof}

The bound \eqref{eq:expdH}
of Theorem~\ref{thm:gaussnoise} behaves as one
would expect.  If the variance $\sigma_z^2$ of the added
Gaussian noise is small, we predict that the expected fraction of
errors is also small.
To place this result in context, recall that $\tau_i \sim\cN(0, R^2/n)$. Suppose that $\sigma_z^2 = c_0 R^2/n$.  In this case one can bound $\kappa_n(\sigma_z^2)$ in~\eqref{eq:kappan} as

\[
\half\sqrt{ \frac{c_0}{c_0+6}} \le \kappa_n(\sigma_z^2) \le \half\sqrt{ \frac{c_0}{c_0+2} }.
\]
Intuitively, if $c_0$ is close to 1, meaning the noise variance is comparable to that of $\tau_i$, then we would expect to lose a significant amount of information about $\x$, in which case  $d_H( \cA(\x), \bar{\cA}(\x))$ could potentially be quite large.  In contrast, by letting $c_0$ grow small we can bound $\kappa_n(\sigma_z^2) \le \sqrt{c_0/8}$ arbitrarily close to zero.

It is instructive to consider Theorem~\ref{thm:gaussnoise}
next to Theorem~\ref{thm:embedmain},
which also predicts the fraction of sign mismatches up to an additive
constant which is proportional to $1/\sqrt{m}$.
\eqref{eq:expdH} and \eqref{eq:taildH} provide upper estimates
of the level of comparison errors which is unavoidable, regardless of
recovery technique.
If, in a particular application the noise is expected to be Gaussian, the
bound in~\eqref{eq:taildH} can be used in the lower half
of~\eqref{eq:embedbound} to create a recovery guarantee.
Note that since Theorem~\ref{thm:embedmain} is a uniform guarantee,
better estimates than this could be possible in the specific case of
Gaussian noise.
We explore Gaussian noise further in Section~\ref{subsec:estguarantee}.

\medskip
\begin{proof}\textbf{of Lemma~\ref{lem:gaussnoise1}}\;
The probability of a sign flip is given by
\[
     \pr \left[ q_i\bar q_i<0 \right] = \pr \left[ q_i < 0 ~~\text{and}~~ \bar q_i > 0 \right] + \pr \left[ q_i > 0 ~~\text{and}~~ \bar q_i < 0 \right].
\]
Note that if we set $r_i: = \a_i^T\x/\norm{\x} \in [-1,1]$, then we can write $q_i = r_i \norm{\x} - \tau_i$ and $\bar q_i = r_i \norm{\x} - \tau_i + z_i$ where the random variables $r_i$, $\tau_i$, and $z_i$ are independent.
Where $f_r(r_i)$ denotes the probability density functions for $r_i$
and recalling that $\tau_i \sim \cN(0,2R^2/n)$, we show in
Appendix~\ref{sec:noisedetails} using standard Gaussian tail bounds that
\begin{equation}
\pr[ q_i\bar q_i<0 ] \le \frac{1}{R} \sqrt{\frac{n}{\pi}} \int_{0}^1 \int_{-\infty}^\infty f_r(r_i) \exp\left(-\frac{(r_i \norm{\x}-\tau_i)^2}{2\sigma_z^2}-\frac{n \tau_i^2}{4R^2}\right) \der \tau_i \der r_i. \label{eq:Pqiqi}
\end{equation}
The remainder of the proof (given in Appendix~\ref{sec:noisedetails})
is obtained by bounding this integral. Note that in general, we have $\frac{1}{2}(r_i+1)\sim \text{Beta}((n-1)/2,(n-1)/2)$,
but $r_i$ is asymptotically normal with variance $1/n$
\cite{spruill2007asymptotic}.
For $n\ge 4$, we use the simple upper bound
\begin{align}
    f_r(r_i) &= \half \biggl[B\left(\frac{n-1}{2},
    \frac{n-1}{2}\right)\biggr]^{-1}
    \biggl(\frac{1+r_i}{2}
    \frac{1-r_i}{2}\biggr)^{(n-3)/2}            \notag
    \\ &\le \half \Biggl[\frac{\sqrt{2\pi}
        \frac{n-1}{2}^{(n-2)/2}\frac{n-1}{2}^{(n-2)/2}
    }{(n-1)^{n-1-1/2}}\Biggr]^{-1}
    \left(\frac{1-r_i^2}{4}\right)^{(n-3)/2}            \notag
    \\ &= \half \biggl[\frac{\sqrt{2\pi}}{2^{n-2}\sqrt{n-1}}\biggr]^{-1}
    \frac{1}{2^{n-3}}
    \exp(-(n-3)r_i^2/2)             \notag
    \\ &= \frac{\sqrt{n-1}}{4\sqrt{2\pi}}\exp(-(n-3)r_i^2/2)
    \le \frac{\sqrt{n}}{4\sqrt{2\pi}}\exp(-nr_i^2/8). \label{eq:ubfd}
\end{align}
This follows from the standard inequalities
$B(x,y) \ge \sqrt{2\pi}x^{x-1/2}y^{y-1/2}/(x+y)^{x+y-1/2}$
\citep[e.g.,][]{grenie2015inequalities}
and $1-x\le\exp(-x)$.
\end{proof}


\section{Estimation algorithm and guarantees}\label{sec:reconstr}

In the noise-free setting, given a set of comparisons $\cA(\x)$, we may
produce an estimate $\widehat{\x}$ by finding \emph{any} $\widehat{\x} \in \B^n_R$ satisfying $\cA(\widehat{\x}) = \cA(\x)$.
A simple approach is the following convex program:
\begin{equation}\label{eq:origproblem}
    \widehat{\x} = \argmin_{\w} \norm{\w}^2
     \quad\text{subject to }\quad  \cA_i(\x)(\a_i^T\w-\tau_i) \ge 0
     \quad\forall i\in[m].
\end{equation}
This is relatively easy to solve since the constraints are simple linear
inequalities and the feasible region is convex.
Note that~\eqref{eq:origproblem} is guaranteed to satisfy $\widehat{\x} \in \B^n_R$ since $\x \in \B^n_R$ and $\x$ is feasible, so that $\norm{\widehat{\x}} \le \norm{\x} \le R$.  In this case we may apply Theorem~\ref{thm:rndmeasmain} to argue that if $m$ obeys the bound in~\eqref{eq:sufficientm}, then $\norm{\widehat{\x} - \x} \le \epsilon$.

However, in most practical applications, observations are likely
to be corrupted by noise leading to inconsistencies.
Any errors in the observations
$\cA(\vx)$ would make strictly enforcing $\cA(\widehat{\x}) = \cA(\x)$ a questionable goal since, among other drawbacks, $\x$ itself would become infeasible.  In fact, in this case we cannot even necessarily guarantee that~\eqref{eq:origproblem} has {\em any} feasible solutions.
In the noisy case we instead use a relaxation inspired by the extended $\nu$-SVM of \cite{perez2003extension},
which introduces slack variables $\xi_i \ge 0$ and
is controlled by the parameter $\nu$.
Specifically, we denote by $\bar\cA(\x)$ the collection of (potentially)
corrupted measurements, and we solve
\begin{equation}\label{eq:nusvm}
    \begin{aligned}
    \underset{\widehat{\w}\in\R^{n+1} ,\boldsymbol{\xi}\in\R^m,\rho \in \R}
    {\text{minimize}}\quad
    & {-}\nu\rho + \frac{1}{m}\sum_{i=1}^m \xi_i
    \\ \text{ subject to }\quad &
        \bar\cA_i(\x)
        ([\a_i^T, -\tau_i]\widehat{\w}) \ge \rho-\xi_i, \quad \xi_i\ge0,  \quad \forall i \in [m],
        \\ & \norm{\widehat{\w}[1:n]}^2 \le {\textstyle\frac{2R^2}{1+R^2}},
        \quad \text{and} \quad
         \norm{\widehat{\w}}^2 = 2.
    \end{aligned}
\end{equation}
Finally, we set $\widehat\x = \widehat\w[1,\dots,n] / \widehat\w[n+1]$.
The additional constraint $\norm{\widehat{\w}[1:n]}^2
\le \frac{2R^2}{1+R^2}$ ensures that $\norm{\widehat{\x}}\le R$.
Note that an important difference between the extended $\nu$-SVM and \eqref{eq:nusvm}
is that there is no ``offset'' parameter to be optimized over.
That is, if we interpret $[\a_i,-\tau_i]$ as ``training examples,''
then $\w := [\x, 1]\in\R^{n+1}$ corresponds to
a \emph{homogeneous} linear classifier. Note that in the absence of comparison errors, setting $\nu = 0$,
we would have a feasible solution with $\xi_i = 0$.

Unfortunately, due to the norm equality constraint,
\eqref{eq:nusvm} is not convex and a unique
global minimum cannot be guaranteed, i.e., there may be multiple
solutions $\widehat\x$.  Nevertheless,
the following result shows that any local minimum will have certain desirable properties, and in the process also provides guidance on choosing the parameter $\nu$.  Combined with
our previous results, this also allows us to give recovery guarantees.

\begin{proposition}
    \label{prop:nusvm}
    At {\em any} local minimum $\widehat{\x}$ of \eqref{eq:nusvm}, we have $\frac{1}{m} |\{i:\xi_i>0\}|  \le \nu.$ If the corresponding $\rho > 0$, this further implies that $d_H(\cA(\widehat{\x}), \bar\cA(\x)) \le \nu$.
\end{proposition}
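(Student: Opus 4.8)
The plan is to exploit the Karush--Kuhn--Tucker (KKT) structure of the slack variables in~\eqref{eq:nusvm} at a local minimum, since the claim is essentially a statement about how the $\nu$-SVM objective balances the total slack against the margin $\rho$. The key observation is that the bound $\frac{1}{m}|\{i : \xi_i > 0\}| \le \nu$ is the standard ``$\nu$-property'' from the support vector machine literature, and it follows from examining the optimality conditions with respect to $\rho$.

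\begin{proof}
Consider a local minimum $(\widehat{\w},\boldsymbol{\xi},\rho)$ of~\eqref{eq:nusvm}. The first step is to argue that at any local minimum, each slack variable takes its smallest feasible value, namely $\xi_i = \max\{0, \rho - \bar\cA_i(\x)([\a_i^T,-\tau_i]\widehat{\w})\}$. This is because the objective $-\nu\rho + \frac{1}{m}\sum_i \xi_i$ is strictly increasing in each $\xi_i$, while the constraints only impose the lower bounds $\xi_i \ge 0$ and $\xi_i \ge \rho - \bar\cA_i(\x)([\a_i^T,-\tau_i]\widehat{\w})$; the norm constraints do not involve $\boldsymbol{\xi}$. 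Hence for fixed $\widehat{\w}$ and $\rho$ we may decrease any slack that exceeds this value without leaving the feasible set and without increasing the objective, so at a local minimum each $\xi_i$ equals this hinge expression.

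The second step is to perturb $\rho$. Since the $\widehat{\w}$-constraints do not involve $\rho$, and $\boldsymbol{\xi}$ is pinned to the hinge value above, we may regard the objective near the optimum as a function of $\rho$ alone (with $\widehat{\w}$ held fixed at its optimal value), namely $g(\rho) := -\nu\rho + \frac{1}{m}\sum_i \max\{0,\rho - s_i\}$, where $s_i := \bar\cA_i(\x)([\a_i^T,-\tau_i]\widehat{\w})$. This $g$ is convex and piecewise linear in $\rho$, and $\rho$ is unconstrained. At a local (hence, by convexity in $\rho$, global-in-$\rho$) minimum, $0$ must lie in the subdifferential of $g$. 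The derivative of $g$ to the right of the optimum is $-\nu + \frac{1}{m}|\{i : s_i < \rho\}|$ and to the left is $-\nu + \frac{1}{m}|\{i : s_i \le \rho\}|$. Now $\{i : \xi_i > 0\} = \{i : s_i < \rho\}$, so the right-derivative being nonnegative at optimum gives exactly $\frac{1}{m}|\{i : \xi_i > 0\}| \le \nu$, which is the first claim.

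For the second claim, suppose $\rho > 0$. A comparison is in error precisely when the constraint $\bar\cA_i(\x)([\a_i^T,-\tau_i]\widehat{\w}) \ge \rho - \xi_i$ is satisfied with the wrong sign, i.e., when $s_i < 0$; recalling $\widehat{\x} = \widehat{\w}[1:n]/\widehat{\w}[n+1]$ one checks that $d_H(\cA(\widehat{\x}),\bar\cA(\x))$ counts the fraction of $i$ with $s_i \le 0$. The final step is to observe that whenever $s_i \le 0 < \rho$ we have $\xi_i = \rho - s_i \ge \rho > 0$, so every mismatched index already belongs to $\{i : \xi_i > 0\}$. Therefore $d_H(\cA(\widehat{\x}),\bar\cA(\x)) \le \frac{1}{m}|\{i : \xi_i > 0\}| \le \nu$, completing the proof.
\end{proof}

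I expect the main obstacle to be the second step, justifying the subdifferential computation carefully: one must confirm that $\rho$ really is free to move in both directions (it is unconstrained in~\eqref{eq:nusvm}) and that holding $\widehat{\w}$ fixed while varying $\rho$ stays inside the feasible region, so that the one-dimensional convex argument legitimately applies at a \emph{local} minimum of the full (nonconvex) program. The remaining bookkeeping—translating the sign condition $s_i \le 0$ into a genuine comparison error for $\widehat{\x}$ via the homogeneous-classifier interpretation $\w = [\x,1]$—is routine but needs the sign of $\widehat{\w}[n+1]$ to be handled, which the constraint $\norm{\widehat\w[1:n]}^2 \le \frac{2R^2}{1+R^2}$ together with $\norm{\widehat\w}^2 = 2$ guarantees is nonzero.
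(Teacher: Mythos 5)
Your proof takes a genuinely different route from the paper's. The paper proves the first claim via the classical $\nu$-SVM machinery: it forms the Lagrangian of~\eqref{eq:nusvm}, verifies the Mangasarian--Fromovitz constraint qualification (needed to invoke KKT at a local minimum of this nonconvex program), and reads the bound off the stationarity conditions $\sum_i \alpha_i = \nu$ and $\alpha_i + \beta_i = 1/m$ together with complementary slackness ($\xi_i > 0 \Rightarrow \beta_i = 0 \Rightarrow \alpha_i = 1/m$). You instead argue entirely in the primal: pin each $\xi_i$ to its hinge value $\max\{0,\rho - s_i\}$, note that the nonconvex constraints involve only $\widehat{\w}$, and reduce to the one-dimensional convex piecewise-linear function $g(\rho)$, whose subdifferential optimality condition gives the count. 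This is more elementary and buys something real: your perturbations move only $(\boldsymbol{\xi},\rho)$, so the constraint-qualification issue that the paper must address never arises. Your treatment of the second claim (when $\rho>0$, every mismatched index has $\xi_i = \rho - s_i \ge \rho > 0$) is essentially identical to the paper's contrapositive argument.

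Two corrections are needed, though neither damages the structure. First, your one-sided derivatives are swapped: the right derivative of $\max\{0,\rho - s_i\}$ at $\rho = s_i$ is $1$ and the left derivative is $0$, so in fact $g'(\rho^+) = -\nu + \frac{1}{m}\left|\{i : s_i \le \rho\}\right|$ and $g'(\rho^-) = -\nu + \frac{1}{m}\left|\{i : s_i < \rho\}\right|$. As written your inference is backwards: nonnegativity of your claimed right derivative would give $\frac{1}{m}\left|\{i : s_i < \rho\}\right| \ge \nu$, the wrong direction. The correct step is that the \emph{left} derivative being nonpositive at the minimum yields $\frac{1}{m}\left|\{i : \xi_i > 0\}\right| = \frac{1}{m}\left|\{i : s_i < \rho\}\right| \le \nu$; the fix is one line. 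Second, the constraints $\norm{\widehat{\w}}^2 = 2$ and $\norm{\widehat{\w}[1:n]}^2 \le \frac{2R^2}{1+R^2}$ force $\widehat{\w}[n+1] \ne 0$, but not $\widehat{\w}[n+1] > 0$; if $\widehat{\w}[n+1] < 0$, the identification of $s_i > 0$ with a correct comparison for $\widehat{\x} = \widehat{\w}[1:n]/\widehat{\w}[n+1]$ reverses, so ``nonzero'' alone does not close this. In fairness, the paper's own proof silently makes the same positivity assumption, so your argument is on equal footing there.
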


\begin{proof}
This proof follows similarly to that of Proposition 7.5
of~\cite{scholkopf2001learning}, except applied to the extended
$\nu$-SVM of~\cite{perez2003extension}
and with the removal of the hyperplane bias term.
Specifically, we first form the Lagrangian of~\eqref{eq:nusvm}:
\[\begin{aligned}
    L(\widehat\w,\boldsymbol{\xi},\rho,
    \boldsymbol{\alpha},\boldsymbol{\beta},\gamma,\delta)
    = -\nu\rho & + \frac{1}{m}\sum_i\xi_i
    - \sum_i(\alpha_i(\bar\cA_i(\x)[\a_i, -\tau_i]^T\widehat\w-\rho+\xi_i)
    + \beta_i\xi_i)
    \\ &
    + \gamma\biggl(\frac{2R^2}{1+R^2}-\norm{\widehat\w[1:n]}^2\biggr)
    - \delta(2-\norm{\widehat\w}^2).
\end{aligned}\]
We define the functions corresponding to the equality constraint
($h_1$) and inequality constraints ($g_i$ for $i\in[2m+1]$) as follows:
\begin{align*}
    h_1(\w,\boldsymbol{\xi},\rho) &:= (2-\norm{\w}^2),  \\
    g_{i}(\w,\boldsymbol{\xi},\rho) &:= \begin{cases}
        \bar\cA_i(\x)[\a_i,
        -\tau_i]^T\w-\rho+\xi_i  & i\in[1,m]  \\
        \xi_{i-m} &  i\in [m+1,2m]   \\
        -\Bigl(\frac{2R^2}{1+R^2}
            -\norm{\w[1:n]}^2\Bigr)
            & i = 2m+1. \\
    \end{cases}
\end{align*}

Consider the $n+m+2$ variables $(\widehat{\w},\boldsymbol{\xi},\rho)$.
The gradient corresponding to the equality constraint, $\boldsymbol{\nabla} \mathbf{h}_1$,
involves only the first $n+1$ variables.
Thus, there exists an $m+1$ dimensional subspace
$\mathcal{D}\subset\R^{n+m+2}$ where
for any $\mathbf{d} \in\mathcal{D}$, $\boldsymbol{\nabla} \mathbf{h}_1^T \mathbf{d} = 0$.
The gradients corresponding to the $2m+1$ inequality constraints are given in the $(2m+1) \times (n+m+2)$ matrix
\begin{align*}
    \mathbf{G} := \left[\begin{array}{c}
        \boldsymbol{\nabla}  \mathbf{g}_1^T      \\
        \vdots          \\
        \boldsymbol{\nabla}  \mathbf{g}_m^T      \\ \hline
        \boldsymbol{\nabla}  \mathbf{g}_{m+1}^T  \\
        \vdots          \\
        \boldsymbol{\nabla}  \mathbf{g}_{2m}^T   \\ \hline
        \boldsymbol{\nabla}  \mathbf{g}_{2m+1}^T \\
    \end{array}\right] = \left[
        \begin{array}{c|@{\;}cccc}
            \cdots & -1 & \cdots & 0 & 1\\
            \cdots & \vdots & \ddots & \vdots & \vdots  \\
            \leftarrow (n+1) \rightarrow & 0 & \cdots & -1 & 1 \\
            \cline{2-5} \text{irrelevant} & -1 & \cdots & 0 & 0\\
            \cdots & \vdots & \ddots & \vdots & \vdots \\
            \cdots & 0 & \cdots & -1 & 0\\
            \cline{2-5} \leftarrow\;\widehat{\w}[1:n]\;\rightarrow\;0
                & 0 & \cdots & 0 & 0
        \end{array}
    \right].
\end{align*}
Since there is a $\mathbf{d} \in\mathcal{D}$ such that $(\mathbf{G} \mathbf{d})[i] < 0$ for all $i$
(for example, $\mathbf{d}=[0,\dots,0|1,\dots,1,-1]$),
the Mangasarian--Fromovitz constraint qualifications hold
and we have the following
first-order necessary conditions for local minima
\citep[see e.g.,][]{bertsekas1999nonlinear},
\[
    \pd{L}{\rho} = -\nu + \sum\alpha_i
    \quad \implies \quad \sum\alpha_i = \nu
\]
and
\[
    \pd{L}{\xi_i} = \frac{1}{m} - \alpha_i - \beta_i = 0
    \quad \implies \quad \alpha_i+\beta_i=\frac{1}{m}.
\]
Since $\sum_{i=1}^m\alpha_i = \nu$, at most a fraction of $\nu$ can have
$\alpha_i=1/m$.  Now, any $i$ such that $\xi_i>0$ must
have $\alpha_i=1/m$ since by complimentary slackness, $\beta_i = 0$.
Hence, $\nu$ is an upper bound on the fraction of $\xi$ such that $\xi_i>0$.

Finally, note that if $\rho > 0$, then $\xi_i = 0$ implies
$\bar\cA_i(\x)([\a_i^T, -\tau_i]\widehat{\w}) \ge \rho-\xi_i > 0$.
Hence, the fraction of $\xi$ such that $\xi_i>0$ is an upper bound for $d_H(\cA(\widehat{\x}), \bar\cA(\x))$.
\end{proof}

\subsection{Estimation guarantees}\label{subsec:estguarantee}

We now show how the results of Theorems~\ref{thm:gaussnoise} and~\ref{thm:embedmain} can be combined with Proposition~\ref{prop:nusvm} to give recovery
guarantees on $\norm{\widehat{\x}-\x}$ when \eqref{eq:nusvm} is used
for recovery under realistic noisy observation models.
We consider three basic noise models.  In the first, an arbitrary (but small) fraction of comparisons are reversed.  We then consider the implication of this result in the context of two other noise models, one where Gaussian noise is added to either the underlying $\x$ or to the comparisons ``pre-quantization,'' that is, directly to $(\a_i^T\x-\tau_i)$, and another where the observations are generated using an arbitrary (but bounded) perturbation of $\x$. We will ultimately see that largely similar guarantees are possible in all three cases,
summarized in Table~\ref{tbl:estsummary}.

\begin{table}[h]\centering
    \caption{Summary of our estimation guarantees in
    this section, where each result holds separately with probability at
    least $1-\eta$ where $\eta,c_1,c_2,C_1,C_2$ are constants which may
    differ between results.
    }\label{tbl:estsummary}
\begin{tabular}{@{}clc@{}}\toprule
    Noise type
        & Guarantee on $\norm{\widehat{\x} - \x}/R$
        & Eq.{} \\ \midrule
    Adversarial, level $\kappa$
        & $\circ\le \frac{2}{C_1} \kappa
    + \frac{c_1}{C_1} \sqrt{ \frac{n \log(18m) + \log (2/\eta)}{2m}}$
        & \eqref{eq:upper_noise3}  \\
    i.i.d.\ Gaussian, $\sigma_z^2$
        &  $\circ\le \frac{\sqrt{2}}{C_1} \sqrt{\frac{n\sigma_z^2}{R^2}}  + \frac{1}{C_1} \sqrt{\frac{\log(1/\eta)}{2m}} + \frac{c_1}{C_1} \sqrt{ \frac{n \log(18m) + \log (2/\eta)}{2m}}$
        & \eqref{eq:Gaussnoisebound} \\
    Perturbations $x\mapsto x'$
        & $\circ\le \frac{2 C_2}{C_1} \frac{\norm{\x-\x'}}{R}
    + \frac{c_1 + 2c_2}{C_1} \sqrt{ \frac{n \log(18m) + \log (2/\eta)}{2m}}$
        & \eqref{eq:arbnoisebound} \\ \bottomrule
\end{tabular}
\end{table}

In our analysis of all three settings, we will use the fact that from the lower bound of Theorem~\ref{thm:embedmain} we have
\begin{equation} \label{eq:upper_noise}
    \frac{\norm{\widehat{\x} - \x}}{R}
    \le \frac{d_H(\cA(\x), \bar\cA(\x)) + c_1 \zeta}{C_1}
\end{equation}
with probability at least $1-\eta$ provided that $m$ is sufficiently large, e.g., by taking
\begin{equation} \label{eq:mboundembed}
m = \frac{1}{2\zeta^2}
    \left(2n\log
            \frac{3\sqrt{n}}{\zeta }
      + \log  \frac{2}{\eta} \right).
\end{equation}
Note that by setting $\beta = \frac{9n}{\zeta^2} \left(\frac{2}{\eta}\right)^{1/n}$, we can rearrange~\eqref{eq:mboundembed} to be of the form
\[
18m \left(\frac{2}{\eta}\right)^{1/n} = \beta \log \beta,
\]
which implies that
\[
\beta = \frac{18m (2/\eta)^{1/n}}{W \left(18m (2/\eta)^{1/n} \right)},
\]
where $W(\cdot)$ denotes the Lambert $W$ function. Using the fact that $W(x) \le \log(x)$ for $x \ge e$ and substituting back in for $\beta$, we have
\[
\zeta \le \sqrt{ \frac{n \log(18m) + \log (2/\eta)}{2m}}
\]
under the mild assumption that $m \ge \frac{e}{18} (\frac{\eta}{2})^{1/n}$. Substituting this in to~\eqref{eq:upper_noise} yields
\begin{equation} \label{eq:upper_noise2}
    \frac{\norm{\widehat{\x} - \x}}{R}
    \le \frac{d_H(\cA(\x), \bar\cA(\x))}{C_1}
    + \frac{c_1}{C_1} \sqrt{ \frac{n \log(18m) + \log (2/\eta)}{2m}}.
\end{equation}
We use this bound repeatedly below.

\paragraph{Noise model 1.}

In the first noise model, we suppose that an adversary is allowed to arbitrarily flip a fraction $\kappa$ of measurements, where we assume $\kappa$ is known (or can be bounded). This would seem to be a challenging setting, but in fact a guarantee under this model follows immediately from the lower bound in Theorem~\ref{thm:embedmain}.  Specifically, suppose that $\cA(\x)$ represents the noise-free comparisons, and we receive instead $\bar\cA(\x)$, where $d_H(\cA(\x),\bar\cA(\x)) \le \kappa$.

Consider using \eqref{eq:nusvm} to produce an $\widehat{\x}$
setting $\nu=\kappa$.
If $\widehat{\x}$ is a local minimum for \eqref{eq:nusvm} with $\rho > 0$,
Proposition~\ref{prop:nusvm} implies that $d_H( \cA(\widehat{\x}), \bar\cA(\x) ) \le \kappa$. Thus, by the triangle inequality,
\[
    d_H( \cA(\widehat{\x}), \cA(\x) )
    \le d_H( \cA(\widehat{\x}), \bar\cA(\x) )
    + d_H(\cA(\x), \bar\cA(\x) ) \le 2\kappa.
\]
Plugging this into~\eqref{eq:upper_noise2} we have that with probability at least $1-\eta$
\begin{equation} \label{eq:upper_noise3}
    \frac{\norm{\widehat{\x} - \x}}{R}
    \le \frac{2 \kappa}{C_1}
    + \frac{c_1}{C_1} \sqrt{ \frac{n \log(18m) + \log (2/\eta)}{2m}}.
\end{equation}

We emphasize the power of this result---the adversary may flip
not merely a random fraction of comparisons,
but an \emph{arbitrary} set of comparisons.
Moreover, this holds uniformly for all $\x$ and $\widehat{\x}$
simultaneously (with high probability).

\paragraph{Noise model 2.}

Here we model errors as being generated by adding i.i.d.\ Gaussian before the $\sign(\cdot)$
function, as described in Section~\ref{sec:gaussnoise}, i.e.,
\[
    \bar\cA_i(\x) = \sign(\a_i^T\x - \tau_i + z_i),
\]
where $z_i \sim \mathcal{N}(0,\sigma_z^2)$.
Note that this model is equivalent to the Thurstone model of comparative
judgment \cite{thurstone1927law}, and causes a predictable probability of
error depending the geometry of the set of items.  Specifically, comparisons which are
``decisive,'' i.e., whose hyperplane lies far from $\x$, are unlikely to
be affected by this noise.  Conversely, comparisons which are nearly
even are quite likely to be affected.

Under the random observation model considered in this paper, by Theorem~\ref{thm:gaussnoise} we have that, with probability at least $1-\eta$,
\[
    d_H(\cA(\x), \bar\cA(\x)) \le \kappa_n(\sigma_z^2) + \sqrt{\frac{\log (1/\eta)}{2m}},
\]
where
\[
\kappa_n(\sigma_z^2) = \sqrt{\frac{\sigma_z^2}{\sigma_z^2 + 2 R^2/n + 4\norm{\x}^2/n}} \le \sqrt{\frac{n\sigma_z^2}{2 R^2}}.
\]
We now assume that $\widehat{\x}$ is a local minimum of \eqref{eq:nusvm}
with $\nu = \kappa_n(\sigma_z^2)$ such that $\rho > 0$.
By the triangle inequality and Proposition~\ref{prop:nusvm},
\[
    d_H(\cA(\widehat{\x}), \cA(\x))
    \le d_H(\cA(\widehat{\x}), \bar\cA(\x)) + d_H(\cA(\x),\bar\cA(\x))
    \le 2 \sqrt{\frac{n\sigma_z^2}{2 R^2}}  + \sqrt{\frac{\log(1/\eta)}{2m}}.
\]
Combining this with~\eqref{eq:upper_noise2}, we have that with probability at least $1-2\eta$,
\begin{equation} \label{eq:Gaussnoisebound}
\frac{\norm{\widehat{\x} - \x}}{R}
    \le \frac{\sqrt{2}}{C_1} \sqrt{\frac{n\sigma_z^2}{R^2}}  + \frac{1}{C_1} \sqrt{\frac{\log(1/\eta)}{2m}} + \frac{c_1}{C_1} \sqrt{ \frac{n \log(18m) + \log (2/\eta)}{2m}}.
\end{equation}

We next consider an alternative perspective on this model. Specifically, suppose that our observations are generated via
\[
    \bar\cA_i(\x) = \cA_i(\x'_i)   \quad\text{where}\quad
    \x'_i = \x + \z_i,
\]
where $\z_i\sim\N(0,\sigma_z^2I)$.  Note that we can write this as
\[
    \cA_i(\x'_i) = \a_i^T(\x + \z_i) - \tau_i
        = \a_i^T\x - \tau_i + \a_i^T \z_i.
\]
Since $\norm{\a_i}=1$, $\a_i^T \z_i \sim \mathcal{N}(0,\sigma_z^2)$, and thus this is equivalent to the model described above.  Thus, we can also interpret the above results as applying when each comparison is generated using a ``misspecified'' version of $\x$ which has been perturbed by Gaussian noise.  Moreover, note that
\[
\E \norm{\x - \x'_i}^2 = \E \norm{\z_i}^2 = n \sigma_z^2,
\]
in which case we can also express the bound in~\eqref{eq:Gaussnoisebound} as
\begin{equation} \label{eq:Gaussnoisebound2}
\frac{\norm{\widehat{\x} - \x}}{R}
    \le \frac{\sqrt{2}}{C_1} \sqrt{\frac{\E \norm{\x - \x'_i}^2}{R^2}}  + \frac{1}{C_1} \sqrt{\frac{\log(1/\eta)}{2m}} + \frac{c_1}{C_1} \sqrt{ \frac{n \log(18m) + \log (2/\eta)}{2m}}.
\end{equation}
Thus, a small Gaussian perturbation of $\x$ in the comparisons will result in an increased recovery error roughly proportional to the (average) size of the perturbation.

Note that in establishing this result we apply Theorem~\ref{thm:gaussnoise}, and so in contrast to our first noise model, here the result holds with high probability for a fixed $\x$ (as opposed to being uniform over all $\x$ for a single choice of $\cA$).

\paragraph{Noise model 3.}

In the third noise model, we assume the comparisons
are generated according to
\[
    \bar\cA(\x) = \cA(\x'),
\]
where $\x'$ represents an arbitrary perturbation of $\x$.  Much like in the previous model, comparisons which are ``decisive'' are not likely to be affected by this kind of noise, while comparisons which are nearly even are quite likely to be affected. Unlike the previous model, our results here make no assumption on the distribution of the noise and will instead use the upper bound in Theorem~\ref{thm:embedmain} to establish a uniform guarantee that holds (with high probability) simultaneously for all choices of $\x$ (and $\x'$). Thus, in this model our guarantees are quite a bit stronger.

Specifically, we use the fact that from the upper bound of Theorem~\ref{thm:embedmain},  with probability at least $1-\eta$ we simultaneously have~\eqref{eq:upper_noise} and
\[
    d_H(\cA(\x),\bar\cA(\x)) \le C_2\frac{\norm{\x-\x'}}{R}
    + c_2 \zeta
    =: \kappa.
\]
We again use \eqref{eq:nusvm} with $\nu = \kappa$ and Proposition~\ref{prop:nusvm} to produce an
estimate $\widehat{\x}$ satisfying $d_H( \cA(\widehat{\x}), \bar\cA(\x) ) \le \kappa$.
Again using the triangle inequality, we have
\[
    d_H( \cA(\widehat{\x}), \cA(\x) )
    \le d_H( \cA(\widehat{\x}), \bar\cA(\x) )
    + d_H(\cA(\x), \bar\cA(\x) ) \le 2\kappa.
\]
Combining this with~\eqref{eq:upper_noise}
we have
\[
    \frac{\norm{\widehat{\x} - \x}}{R}
    \le
    \frac{2 \kappa + c_1 \zeta}{C_1}
    =  \frac{2 C_2}{C_1} \frac{\norm{\x-\x'}}{R}
    + \frac{c_1 + 2c_2}{C_1} \zeta.
\]
Substituting in for $\zeta$ as in~\eqref{eq:upper_noise2} yields
\begin{equation} \label{eq:arbnoisebound}
    \frac{\norm{\widehat{\x} - \x}}{R}
    \le
    \frac{2 C_2}{C_1} \frac{\norm{\x-\x'}}{R}
    + \frac{c_1 + 2c_2}{C_1} \sqrt{ \frac{n \log(18m) + \log (2/\eta)}{2m}}.
\end{equation}
Contrasting the result in~\eqref{eq:arbnoisebound} with that in~\eqref{eq:Gaussnoisebound2}, we note that up to constants, the results are essentially the same.  This is perhaps somewhat surprising since~\eqref{eq:arbnoisebound} applies to arbitrary perturbations (as opposed to only Gaussian noise), and moreover,~\eqref{eq:arbnoisebound} is a uniform guarantee.

 
\subsection{Adaptive estimation}\label{sec:adaptive}
Here we describe a simple extension to our previous (noiseless)
theory and show that
if we modify the mean and variance of the sampling distribution of items
over a number of stages, we can localize \emph{adaptively} and produce an
estimate with many fewer comparisons than possible in a non-adaptive
strategy.  We assume $t$ stages ($t=1$ for the non-adaptive approach).  At
each stage $\ell\in[t]$ we will attempt to produce an estimate
$\widehat \x^{\ell}$
such that $\|\x-\widehat\x^{\ell}\|\le\epsilon_\ell$
where $\epsilon_\ell = R_\ell/2 = R 2^{-\ell}$,
then recentering to our previous estimate and dividing the
problem radius in half.  In stage $\ell$,
each $\p_i,\q_i\sim\cN(\widehat\x, 2R_\ell^2/n \mat I)$.
After $t$ stages we will have
$\|\x - \widehat \x^t\| \le R 2^{-t} =: e_t$
with probability at least $1-t\eta$.
\begin{proposition}\label{prop:adaptive}
Let $\epsilon_t, \eta > 0$ be given. Suppose that $\x\in\B^n_R$ and that
$m$ total comparisons are obtained following the adaptive scheme where
\[
    m \ge 2C \log_2\left(\frac{2R}{\epsilon_t}\right)
    \left(n\log 2 \sqrt{n} + \log\frac{1}{\eta}\right),
\]
where $C$ is a constant.
Then with probability at least $1-\log_2(2R/\epsilon_t)\eta$,
for any estimate $\widehat\x$ satisfying $\cA(\widehat\x) = \cA(\x)$,
\[
    \norm{\x-\widehat\x} \le \epsilon_t.
\]
\end{proposition}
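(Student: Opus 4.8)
The plan is to view the $t$-stage adaptive scheme as $t = \log_2(R/\epsilon_t)$ successive applications of Theorem~\ref{thm:rndmeasmain}, one per stage, each operating on a ball whose radius has been halved relative to the previous stage. At stage $\ell$ I recenter the sampling distribution at the previous estimate $\widehat\x^{\ell-1}$ and set the problem radius to $R_\ell = R\,2^{-(\ell-1)}$, targeting accuracy $\epsilon_\ell = R_\ell/2 = R\,2^{-\ell}$. Since the observation model in~\eqref{eq:measmodel00} depends only on the relative positions of $\x$, $\p_i$, and $\q_i$, it is invariant under a common translation; I would exploit this to shift coordinates so that stage $\ell$ becomes exactly the origin-centered setting of Theorem~\ref{thm:rndmeasmain}, applied to $\x - \widehat\x^{\ell-1}$ with radius $R_\ell$, accuracy $\epsilon_\ell$, and variance $\sigma^2 = 2R_\ell^2/n$ (precisely the choice prescribed by the theorem).

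First I would handle the per-stage comparison count. The crucial point is that the ratio $R_\ell/\epsilon_\ell = 2$ is \emph{constant} across stages, so both the $R/\epsilon$ prefactor and the $R\sqrt{n}/\epsilon$ argument of the logarithm in~\eqref{eq:sufficientm} collapse to constants. Plugging into Theorem~\ref{thm:rndmeasmain} shows that each stage requires only
\[
    m_\ell \ge 2C\left(n\log 2\sqrt{n} + \log\tfrac{1}{\eta}\right)
\]
comparisons, independent of $\ell$. Summing over the $t$ stages and using $t \le \log_2(2R/\epsilon_t)$ (the factor $2R$ absorbing the rounding of the number of stages up to an integer) shows that the stated budget on the total $m$ suffices to allocate at least $m_\ell$ comparisons to every stage.

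The main obstacle is the probabilistic bookkeeping, since adaptivity destroys independence: the measurements drawn at stage $\ell$ depend on $\widehat\x^{\ell-1}$, which in turn depends on all earlier measurements. I would resolve this by conditioning. Observe that if stage $\ell-1$ succeeds, then $\norm{\x - \widehat\x^{\ell-1}} \le \epsilon_{\ell-1} = R_\ell$, so $\x$ lies in the stage-$\ell$ problem ball; conditioned on this event, the stage-$\ell$ pairs $(\p_i,\q_i)$ are fresh i.i.d.\ draws from the recentered distribution, and Theorem~\ref{thm:rndmeasmain} applies to give conditional success probability at least $1-\eta$. Chaining these conditional guarantees (equivalently, a union bound over the at most $t$ stage-failure events) yields overall success probability at least $1 - t\eta \ge 1 - \log_2(2R/\epsilon_t)\,\eta$. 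On this good event, at the final stage any estimate $\widehat\x$ consistent with the comparisons (i.e.\ lying in the stage-$t$ ball with $\cA(\widehat\x) = \cA(\x)$) is within $\epsilon_t$ of $\x$ by the \emph{uniform} pairwise guarantee of Theorem~\ref{thm:rndmeasmain}, since $\x$ trivially shares its own comparison pattern. The one point requiring care is verifying that each recentering keeps $\x$ inside the shrinking ball, which is exactly the inductive identity $\epsilon_{\ell-1} = R_\ell$ propagated through the conditioning argument above.
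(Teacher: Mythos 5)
Your proposal is correct and follows essentially the same route as the paper's proof: apply Theorem~\ref{thm:rndmeasmain} once per stage with the constant ratio $R_\ell/\epsilon_\ell = 2$, allocate comparisons equally across the $t \le \log_2(2R/\epsilon_t)$ stages, and union-bound the per-stage failure probabilities to obtain success probability $1 - t\eta$. If anything, you are more careful than the paper, which silently relies on translation invariance and dispatches the dependence introduced by adaptivity with a bare union bound, whereas you make explicit the conditioning on the previous stage's success and the inductive containment $\epsilon_{\ell-1} = R_\ell$ that keeps $\x$ inside each recentered ball.
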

\begin{proof}
The adaptive scheme uses
$t = \ceil{\log_2(R/\epsilon_t)}\le\log_2(2R/\epsilon_t)$ stages.
Assume each stage is allocated $m_\ell$ comparisons.
By Theorem~\ref{thm:rndmeasmain}, localization at each
stage $\ell$ can be accomplished with high probability when
\[\begin{aligned}
    m_\ell &\ge C \frac{R_\ell}{\epsilon_\ell}
    \left( n\log\frac{R_\ell \sqrt{n}}{\epsilon_\ell}
    + \log\frac{1}{\eta} \right)
    = 2C
    \left( n\log2\sqrt{n} + \log\frac{1}{\eta} \right).
\end{aligned}\]
This condition is met by giving an equal number of
comparisons to each stage, $m_\ell = \flr{m/t}$.
Each stage fails with probability $\eta$.  By a union bound,
the target localization fails with probability at most $t\eta$.
Hence, localization succeeds with probability at least $1-t\eta$.
\end{proof}

Proposition~\ref{prop:adaptive} implies
$m_\text{adapt} \asymp (n\log n) \log_2(R/\epsilon_t) $
comparisons suffice to estimate $\x$ to within $\epsilon_t$.
This represents an exponential improvement
in terms of number of total comparisons
as a function of the target accuracy, $\epsilon_t$,
as compared to a lower bound on the number of required comparisons,
$m_\text{lower} := 2nR/(e\epsilon_t)$
for \emph{any} non-adaptive strategy (recall Theorem~\ref{thm:lowerbound}).

Note that this result holds in the noise-free setting, but can be
generalized to handle noisy settings via the approaches discussed in
Sections~\ref{sec:noise} and~\ref{sec:reconstr}.
While we cannot hope to achieve the same exponential improvement in the
general case, we expect rates better than those of passive methods to be
possible for certain noise levels.
 
\section{Simulations}\label{sec:experiments}

In this section we perform a range of synthetic experiments to demonstrate our approach.

\subsection{Effect of varying \texorpdfstring{$\sigma^2$}{the variance}}
In Fig.~\ref{fig:sigmachoice}, we let $\x\in\R^{\{2,3,4,8\}}$ with $\norm{\x}=R=1$.
We vary $\sigma^2$ and perform 1500 trials,
each with $m=100$ or $m = 200$ pairs of points drawn according to $\N(0,\sigma^2I)$.  To isolate the impact of $\sigma^2$, we consider the case where our observations are noise-free, and use~\eqref{eq:origproblem} to recover $\widehat{\x}$.
As predicted by the theory,
localization accuracy depends on the parameter $\sigma$, which controls
the distribution of the hyperplane thresholds.
Intuitively, if $\sigma$ is too small, the hyperplane boundaries
concentrate closer to the origin and do not localize points with large norm
well.  On the other hand, if $\sigma$ is too large, most hyperplanes lie far
from the target $\x$.  The sweet spot which allows uniform localization
over the radius $R$ ball exists around $\sigma^2\approx 2R^2/n$ here.

\def\fw{5.5in}
\begin{figure}[htbp]
    \centering \begin{tabular}{cc}
        \includegraphics[width=2in]{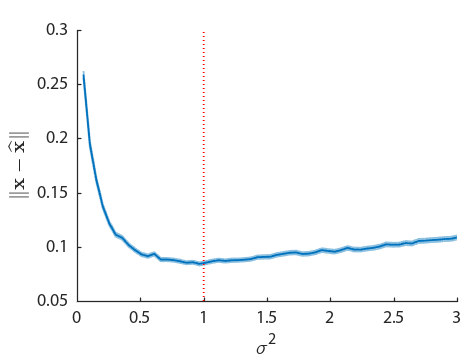}
        & \includegraphics[width=2in]{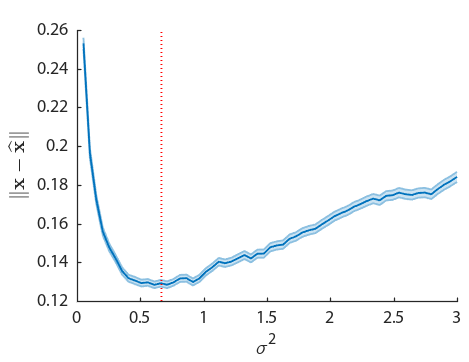}
        \\ (a) & (b) \\
        \includegraphics[width=2in]{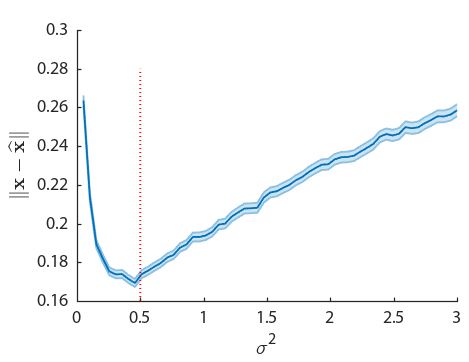}
        & \includegraphics[width=2in]{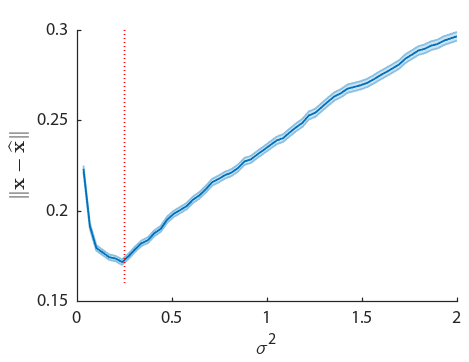}
        \\ (c) & (d) \\
    \end{tabular}
    \caption{Mean error norm $\norm{\x-\widehat{\x}}$
    as $\sigma^2$ varies for dimensions (a) 2, (b) 3, (c) 4, 
and (d) 8 for (a--c) 100 comparisons and (d) 200 comparisons.}
    \label{fig:sigmachoice}
\end{figure}

\subsection{Effect of noise}
Here we experiment with noise as discussed in Sections~\ref{sec:noise} and~\ref{sec:reconstr}, and use the optimization program~\eqref{eq:nusvm} for recovery.  To approximately
solve this non-convex problem, we use the linearization procedure
described in \cite{perez2003extension}.  Specifically, over a number of
iterations $k$, we repeatedly solve the sub-problem
\begin{align*}
    \underset{\rho\in\R,\boldsymbol{\xi}
    \in\R^m,\widehat{\w}^{(k)}\in\R^{n+1}}
    {\text{minimize}}\quad
    & {-}\nu\rho + \frac{1}{m}\sum_{i=1}^m \xi_i
    \\ \text{ subject to }\quad &
    \bar\cA_i(\x)
    ([\a_i^T, -\tau_i]\widehat{\w}^{(k)}) \ge \rho-\xi_i,
    \quad \xi_i\ge0,  \quad \forall i \in [m],
    \\ &
    \widehat{\w}^{(k)T}\widetilde{\w}^{(k)} = 2
\end{align*}
where we set $\widetilde{\w}^{(k+1)} \leftarrow
\chi\widetilde{\w}^{(k)}  + (1-\chi)\widehat{\w}^{(k)}$
with $\chi = 0.7$.  After sufficient iterations, if
$\widetilde{\w}^{(k)} \approx \widehat{\w}^{(k)}$
then \eqref{eq:nusvm} is approximately solved.
This is a linear program and it can be easily verified using the KKT
conditions that $|\{i : \xi_i > 0\}| \le m \nu$.
Thus in practice, this property will always be satisfied after
each iteration.

We also emphasize that the error bounds in Section~\ref{sec:reconstr} rely on the fact from Proposition~\ref{prop:nusvm} that $d_H(\cA(\widehat{\x}), \bar\cA(\x)) \le \nu$, {\em provided that the solution results in a $\rho > 0$}. Unfortunately, we cannot guarantee that this will always be the case. Empirically, we have observed that given a certain noise level quantified by $d_H(\cA(\x), \bar\cA(\x)) = \kappa$, we are more likely to observe $\rho \le 0$ when we aggressively set $\nu = \kappa$.  By increasing $\nu$ somewhat this becomes much less likely. As a rule of thumb, we set $\nu=2\kappa$. We note that while in our context this choice is purely heuristic, it has some theoretical support in the $\nu$-SVM literature
\citep[e.g., see Proposition 5 of][]{nusvmtutorial}.

We consider the following
noise models; {\em (i)} \emph{Gaussian}, where we add pre-quantization
Gaussian noise as in Section \ref{sec:gaussnoise},
{\em (ii)} \emph{logistic}, in which pre-quantization logistic noise is added,
{\em (iii)} \emph{random}, where a uniform random $\nu/2$ fraction of
comparisons are flipped, and {\em (iv)} \emph{adversarial}, where we flip the $\nu/2$
fraction of comparisons whose hyperplanes lie farthest from the
ideal point.
The logistic noise assumption is commonly studied in paired
    comparison literature and is equivalent to the
    Bradley--Terry model~\cite{bradley1952rank}. 
    Although we do not have theory for this case, we expect
    the logistic noise case to behave similarly to the Gaussian case.
    In both the Gaussian and logistic cases, we sweep the added
    noise variance, then for each variance plot against
    the mean number of errors induced as the x-axis.
In each case, we set $n=5$ and generate $m=1000$ pairs of points and a random $\x$ with
$\norm{\x} = 0.7$.

The mean and median recovery error
$\norm{\widehat{\x}-\x}$ and the fraction of violated comparisons
$d_H(\cA(\widehat{\x}), \cA(\x))$ are plotted over 100 independent
trials with varying number of comparison errors in
Figs.~\ref{fig:Gaussian:a32}--\ref{fig:adversarial:a30}.
In  the Gaussian noise, logistic noise, and uniform random comparison flipping cases,
the actual fraction of comparison errors of the estimate is on average much smaller than
our target $\nu$.  This is also seen in the adversarial case
(Fig.~\ref{fig:adversarial:a30}) for smaller levels of error.
However, at a high fraction of error (greater than about 17\%) the error
(both in terms of Euclidean norm and fraction of incorrect comparisons)
grows rapidly.  This illustrates a limitation to the approach of using
slack variables as a relaxation to the 0--1 loss.
We mention that in this regime, the recovery approach of~\eqref{eq:nusvm}
frequently yields $\rho \le 0$, to which our theory does not apply.
Here, the recovery error counter-intuitively \emph{decreases} with increasing comparison flips. This scenario, with a large number of erroneous comparisons, represents a very
difficult situation in which any tractable recovery strategy would
likely struggle.  A possible direction for future work would be to
make \eqref{eq:nusvm} more robust to such large outliers.

\begin{figure}[htbp]\centering 
\includegraphics[width=\fw]{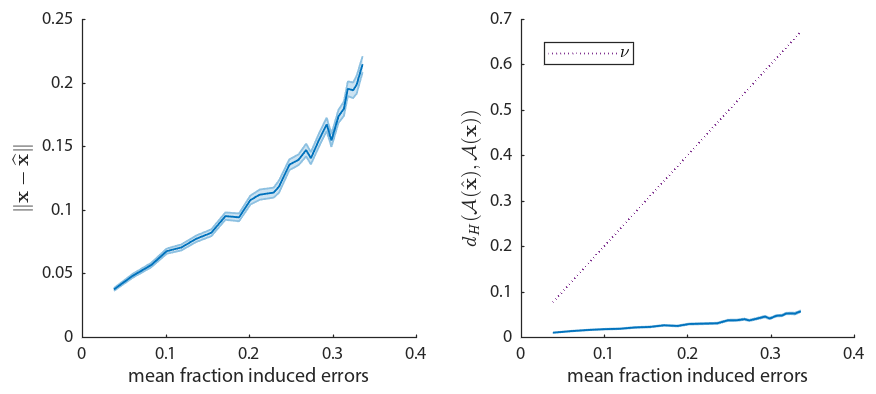}
    \caption{Mean estimation error and average fraction comparison errors
        when adding pre-quantization
        Gaussian noise, sweeping
        the variance and plotting against the average number of
        induced errors for each variance.}
    \label{fig:Gaussian:a32}
\end{figure}
\begin{figure}[htbp]\centering 
\includegraphics[width=\fw]{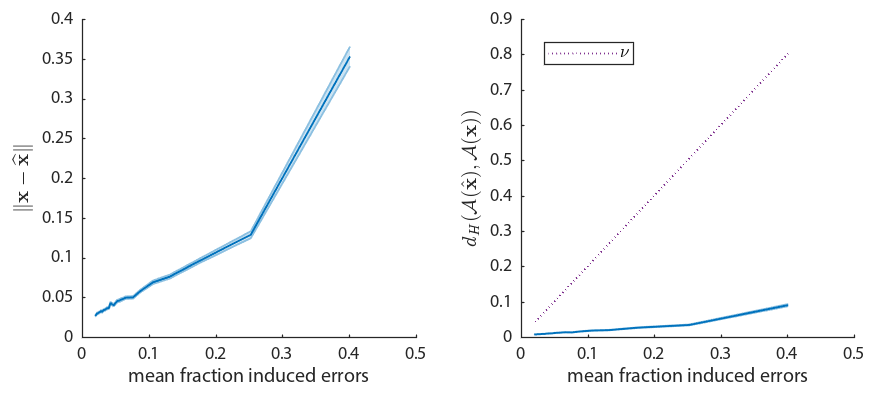}
    \caption{Mean estimation error and fraction comparison errors
        when adding pre-quantization
        logistic noise, sweeping
        the variance and plotting against the average number of
        induced errors for each variance.}
    \label{fig:logistic:a41}
\end{figure}
\begin{figure}[htbp]\centering 
\includegraphics[width=\fw]{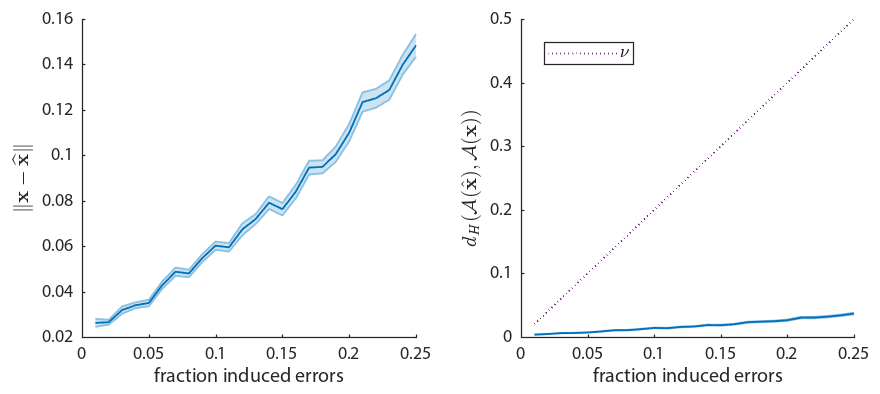}
    \caption{Mean estimation error and fraction comparison errors when introducing uniform random comparison errors.}
    \label{fig:random:a31}
\end{figure}
\begin{figure}[htbp]\centering 
\includegraphics[width=\fw]{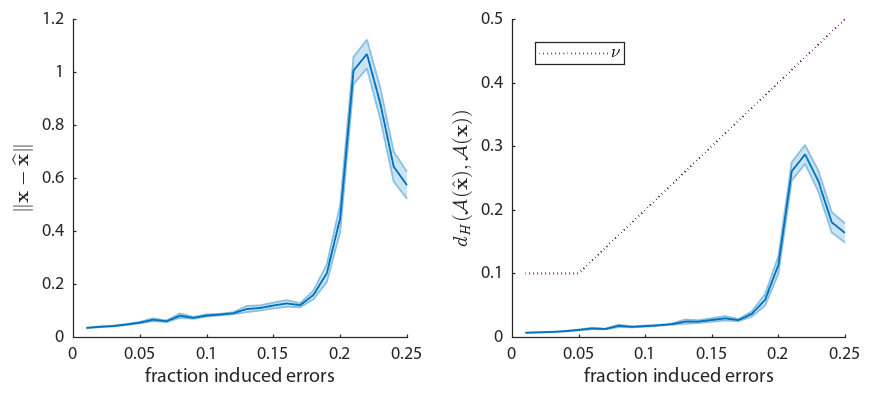}
    \\\medskip
    \includegraphics[width=3.6in]{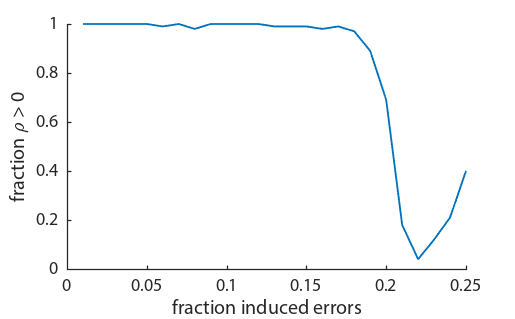}
    \caption{(Top) Mean estimation error and fraction comparison errors
        when flipping the farthest comparisons.
        (Bottom) Fraction of trials with $\rho > 0$.
    } 
    \label{fig:adversarial:a30}
\end{figure}

\subsection{Adaptive comparisons}
In Fig.~\ref{fig:compare1}, we show the effect of
varying levels of adaptivity, starting with the completely non-adaptive
approach up to using 10 stages where we progressively
re-center and re-scale the hyperplane offsets.
In each case, we generate
$\x\in\R^3$ where $\norm{\x} = 0.75$ and choosing the direction randomly.
The total number of comparisons are held fixed and are split as equally as possible
among the number of stages (preferring earlier stages when rounding).
We set $\sigma^2=R=1$ and plot the average over
700 independent trials.
As the number of stages increases, performance worsens
if the number of comparisons are kept small due to bad localization
in the earlier stages.  However, if the number of total
comparisons is sufficiently large,
an exponential improvement over non-adaptivity is possible.

\begin{figure}[htbp]
    \centering 
\includegraphics[width=4.3in]{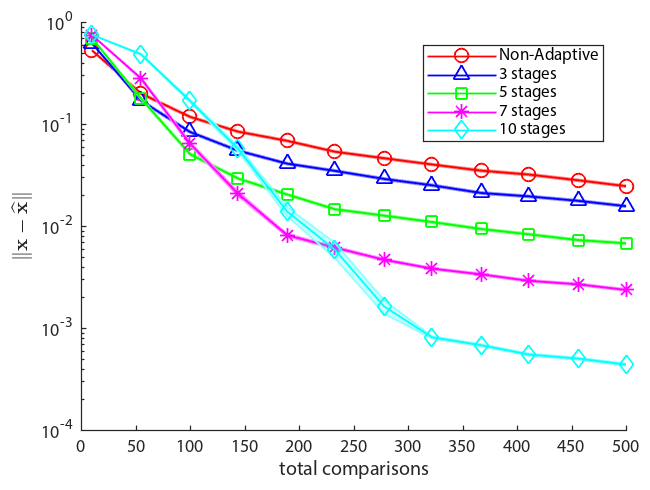}
    \caption{Mean error norm $\norm{\x-\widehat\x}$ versus
        total comparisons for a
        sequence of experiments with varying number of adaptive stages.}
    \label{fig:compare1}
\end{figure}

\subsection{Adaptive comparisons with a fixed non-Gaussian data set}
\label{sec:exper3}
In Fig.~\ref{fig:compare2}, we demonstrate the effect of adaptively choosing
item pairs from a fixed synthetic data set over four stages versus choosing
items non-adaptively, i.e., without attempting to estimate the signal during
the comparison collection process.  We first generated 10,000 items uniformly
distributed inside the 3-dimensional unit ball and a vector $\x\in\R^3$ where
$\norm{\x} = 0.4$.  In both cases, we generate pairs of Gaussian points and
choose the items from the fixed data set which lie closest to them.  In the
adaptive case over four stages, we progressively re-center and re-scale the
generated points; the initial $\sigma^2$ is set to the variance of the data set
and is reduced dyadically after each stage.  The total number of comparisons
is held fixed and is split as equally as possible among the number of stages
(preferring later stages when rounding).  We plot the mean error over 200
independent data set trials.

\begin{figure}[htbp]
    \centering 
\includegraphics[width=4.3in]{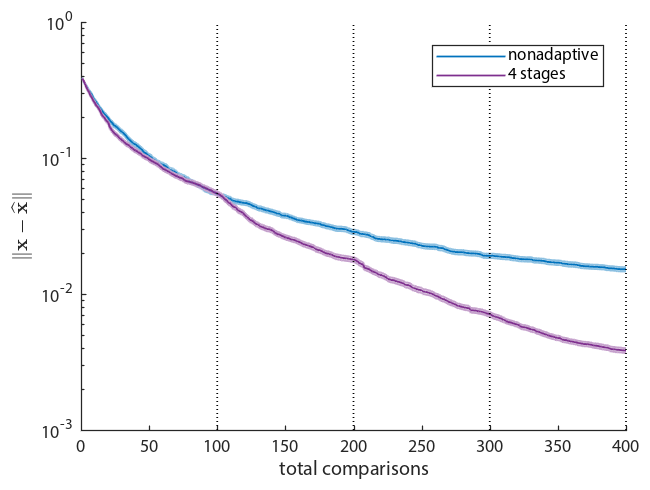}
    \caption{Mean error norm $\norm{\x-\widehat \x}$ versus
        total comparisons for nonadaptive and adaptive selection.
        Dotted lines denote stage boundaries.}
    \label{fig:compare2}
\end{figure}

 
\section{Discussion}\label{sec:discussion}
We have shown that given the ability to generate item pairs according to a Gaussian distribution with a particular variance, it is possible
to estimate a point $\x$ satisfying $\norm{\x} \le R$ to within $\epsilon$
with roughly $nR/\epsilon$ paired comparisons (ignoring log factors). This procedure is also robust to a variety of forms of noise. If one is
able to shift the distribution
of the items drawn, adaptive estimation
gives a substantial improvement over a non-adaptive strategy.  To directly
implement such a scheme, one would require the ability to generate items
arbitrarily in $\R^n$.  While there may be some cases
where this is possible (e.g., in market testing of items where the
features correspond to known quantities that can be manually manipulated, such
as the amount of various ingredients in a food or beverage), in many of the
settings considered by recommendation systems, the only items
which can be compared belong to a fixed set of points. While our theory
would still provide rough guidance as to how accurate of a localization is
possible, many open questions in this setting remain. For instance, the algorithm itself needs to be adapted, as done in
Section~\ref{sec:exper3}.
Of course, there are many other ways that the adaptive scheme
could be modified to account for this restriction.  For example, one could use
rejection sampling, so that although many candidate pairs would need to be
drawn, only a fraction would actually need to be presented to and labeled by
the user. We leave the exploration of such variations for future work.

 
\clearpage
\acks{This work was supported by grants AFOSR FA9550-14-1-0342, NSF CCF-1350616, and a gift from the Alfred P.\ Sloan Foundation.}

\vskip 0.2in
\frenchspacing
\bibliography{ref}

\clearpage\appendix
\section{Supporting lemmas}
\begin{lemma}\label{lem:cdflower}
Let $b > a$ and let $L = \min\{|a|,|b|\}$ and $U = \max\{|a|,|b|\}$. Then if $\Phi$ and $\phi$ respectively denote the standard normal cumulative distribution
function and probability distribution function, we have the bounds
\[
(b-a)\phi(U) \le \Phi(b)-\Phi(a) \le (b-a)\phi(L) \le (b-a)\phi(0).
\]
\end{lemma}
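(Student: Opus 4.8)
The plan is to start from the fundamental theorem of calculus, writing $\Phi(b) - \Phi(a) = \int_a^b \phi(t)\,\mathrm{d}t$, and then to bound the integrand above and below by its extreme values on the interval $[a,b]$. The entire argument hinges on a single structural fact about the standard Gaussian density: $\phi(t) = \tfrac{1}{\sqrt{2\pi}}e^{-t^2/2}$ is even and strictly decreasing in $|t|$, so the relative size of $\phi$ at two points is governed purely by their distances from the origin.

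First I would establish the lower bound. For every $t \in [a,b]$ we have $|t| \le \max\{|a|,|b|\} = U$, and hence $\phi(t) \ge \phi(U)$ since $\phi$ decreases in $|t|$. Integrating this pointwise inequality over $[a,b]$ (which has length $b-a$) yields $\Phi(b)-\Phi(a) = \int_a^b \phi(t)\,\mathrm{d}t \ge (b-a)\phi(U)$. This step requires no assumption on the signs of $a$ and $b$, because the point of $[a,b]$ farthest from the origin is always an endpoint, with absolute value exactly $U$.

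For the upper bound I would argue symmetrically, but here the relevant extreme is the point of $[a,b]$ closest to the origin. When $a$ and $b$ share a sign (so $0 \notin (a,b)$), $\phi$ is monotone on $[a,b]$ and attains its maximum at the endpoint of smaller absolute value, giving $\phi(t) \le \phi(L)$ for all $t\in[a,b]$ and hence $\Phi(b)-\Phi(a) \le (b-a)\phi(L)$. The final inequality $\phi(L) \le \phi(0)$ is then immediate from $L \ge 0$ and monotonicity, and in fact $\Phi(b)-\Phi(a)\le(b-a)\phi(0)$ holds universally since $\phi(t)\le\phi(0)$ for every $t$.

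The one delicate point, which I would flag as the main obstacle, is the middle inequality when the interval straddles the origin, i.e.\ $0 \in (a,b)$: there the closest point to the origin is $0$ itself, so the integrand can reach $\phi(0) > \phi(L)$ and the estimate $(b-a)\phi(L)$ need not hold. This edge case is harmless for the two places the lemma is invoked, since Lemma~\ref{lem:singlemeas} uses only the lower bound $(b-a)\phi(U)$ and Lemma~\ref{lem:P0} uses only the always-valid upper bound $(b-a)\phi(0)$. Accordingly, I would present the $\phi(L)$ estimate as the sharper bound available when $a,b$ have a common sign, while relying on the universally valid $\phi(0)$ bound in the straddling case.
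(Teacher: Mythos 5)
Your argument is essentially the same as the paper's --- the paper writes $\Phi(b)-\Phi(a)=(b-a)\phi(c)$ for some $c\in(a,b)$ via the mean value theorem and then appeals to monotonicity of $\phi$ in $|t|$, which is just the Lagrange form of your integral bound --- but your treatment is more careful on exactly the point that matters. The middle inequality $\Phi(b)-\Phi(a)\le(b-a)\phi(L)$ does genuinely fail when $0\in(a,b)$: for $a=-1$, $b=1$ we have $L=U=1$, $\Phi(1)-\Phi(-1)\approx 0.68$, but $2\phi(1)\approx 0.48$. The paper's proof silently assumes the mean-value point $c$ satisfies $|c|\ge L$, which is only guaranteed when $a$ and $b$ share a sign, so the lemma as stated is incorrect in the straddling case and your qualified version (the $\phi(L)$ bound when $a,b$ have a common sign, the universally valid $\phi(0)$ bound otherwise) is the correct statement. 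Your audit of the downstream uses is also accurate: Lemma~\ref{lem:singlemeas} invokes only the lower bound $(b-a)\phi(U)$, which holds unconditionally since the point of $[a,b]$ farthest from the origin is always an endpoint, and Lemma~\ref{lem:P0} invokes only the $(b-a)\phi(0)$ bound, so no result in the paper is affected by the flaw you identified.
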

\begin{proof}
By the mean value theorem, we have for some $a<c<b$, $\Phi(b) - \Phi(a)= (b-a) \Phi'(c) = (b-a)\phi(c)$.
Since $\phi(|x|)$ is monotonic decreasing, it is lower bounded by $\phi(U)$ and upper bounded by $\phi(L)$ (and also $\phi(0)$).
\end{proof}

\begin{lemma} \label{lem:sphereint}
    Let $\x,\y\in\R^n$.  Then,
        \[
            \int_{\S^{n-1}} |\a^T(\x-\y)| \,\nu(\mathrm{d} \a)
    = \frac{2}{\sqrt\pi}
    \frac{\Gamma(\frac{n}{2})}{\Gamma(\frac{n+1}{2})}
    \norm{\x-\y}.
        \]
\end{lemma}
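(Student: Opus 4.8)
The plan is to use the rotation invariance of $\nu$ to collapse the identity to a single scalar moment. Since $\nu$ is invariant under orthogonal maps and the integrand $|\a^T(\x-\y)|$ is positively homogeneous of degree one in $\x-\y$, I would write $\x-\y = \norm{\x-\y}\,\u$ with $\u$ a unit vector and factor the norm out of the integral. Applying a rotation taking $\u$ to the first coordinate vector $e_1$ (under which $\nu$ is unchanged) then reduces the claim to evaluating the single constant
\[
    m_n := \int_{\S^{n-1}} |\ip{\a}{\u}|\,\nu(\mathrm{d}\a)
         = \int_{\S^{n-1}} |a_1|\,\nu(\mathrm{d}\a),
\]
so that the lemma becomes $\int_{\S^{n-1}}|\a^T(\x-\y)|\,\nu(\mathrm{d}\a) = m_n\,\norm{\x-\y}$ and it remains only to identify $m_n$ with $\tfrac{2}{\sqrt\pi}\,\Gamma(n/2)/\Gamma((n+1)/2)$.

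I would evaluate $m_n$ in either of two ways. The cleanest is the Gaussian representation: draw $\vec g\sim\N(\bzero,\mat I)$, so that $\a := \vec g/\norm{\vec g}$ is uniform on $\S^{n-1}$ and, crucially, independent of the radius $\norm{\vec g}$. Writing $g_1 = \norm{\vec g}\,a_1$, taking absolute values, and using this independence gives $\E|g_1| = \E\norm{\vec g}\cdot m_n$. Both left-hand factors are classical moments: $\E|g_1| = \sqrt{2/\pi}$ (a half-normal mean) and $\E\norm{\vec g} = \sqrt2\,\Gamma((n+1)/2)/\Gamma(n/2)$ (the mean of a $\chi_n$ random variable). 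Alternatively, one can integrate directly against the marginal density of $a_1$, which the excerpt has already recorded: $\tfrac12(\ip{\a}{\u}+1)\sim\mathrm{Beta}((n-1)/2,(n-1)/2)$, i.e.\ $a_1$ has density proportional to $(1-t^2)^{(n-3)/2}$ on $[-1,1]$ with normalizer $B(1/2,(n-1)/2)^{-1}$. Then $m_n = 2\,B(1/2,(n-1)/2)^{-1}\int_0^1 t(1-t^2)^{(n-3)/2}\,\mathrm{d}t$, and the substitution $s = 1-t^2$ turns the remaining integral into the elementary $\tfrac12\int_0^1 s^{(n-3)/2}\,\mathrm{d}s = 1/(n-1)$.

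In both routes the answer emerges as a ratio of Gamma functions, which I would simplify using $B(1/2,(n-1)/2) = \sqrt\pi\,\Gamma((n-1)/2)/\Gamma(n/2)$ together with the recurrence $\Gamma((n+1)/2) = \tfrac{n-1}{2}\,\Gamma((n-1)/2)$; these collapse the expression to a closed form of the type $c_n\,\Gamma(n/2)/\Gamma((n+1)/2)$. There is no conceptual obstacle here: the content is entirely the reduction by symmetry plus standard moment formulas, so the one place to proceed carefully is the bookkeeping of the normalization constant. I would pin down the absolute constant and sanity-check it against the small cases $n=2$ and $n=3$, where a direct computation gives $m_2 = \E|\cos\theta| = 2/\pi$ and (by Archimedes' theorem, since $a_1$ is then uniform on $[-1,1]$) $m_3 = 1/2$; matching these fixes the constant in the closed form.
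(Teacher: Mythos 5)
Your symmetry reduction is the same first move as the paper's proof (which takes $\x-\y = [\epsilon,0,\dots,0]$ without loss of generality); where you differ is in how the resulting scalar moment $m_n = \int_{\S^{n-1}}|a_1|\,\nu(\mathrm{d}\a)$ is evaluated. The paper parametrizes by the polar angle, $a_1=\cos\theta$, and computes $\int_0^\pi|\cos\theta|\sin^{n-2}\theta\,\mathrm{d}\theta$ against the normalizer $\int_0^\pi\sin^{n-2}\theta\,\mathrm{d}\theta$ using a Beta-function table integral. Your Gaussian route ($\E|g_1| = \E\norm{\vec{g}}\cdot m_n$ by independence of radius and direction, then the half-normal and chi means) avoids all surface-measure bookkeeping, and your Beta-marginal route is essentially the paper's computation rewritten in the variable $t=\cos\theta$. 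All three are correct and standard.

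However, there is a real discrepancy you should not gloss over: both of your routes, carried to completion, give
\[
m_n \;=\; \frac{\sqrt{2/\pi}}{\sqrt{2}\,\Gamma\bigl(\tfrac{n+1}{2}\bigr)/\Gamma\bigl(\tfrac{n}{2}\bigr)} \;=\; \frac{1}{\sqrt{\pi}}\,\frac{\Gamma(\tfrac{n}{2})}{\Gamma(\tfrac{n+1}{2})},
\]
which is \emph{half} the constant in the lemma statement. Your own sanity checks decide the matter: $m_2=2/\pi$ and $m_3=1/2$ are the correct values, while the printed formula gives $4/\pi$ and $1$, which are impossible since $|a_1|\le 1$ and $\nu$ is a probability measure, forcing $m_n<1$. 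So the lemma as stated carries a spurious factor of $2$: the paper's own penultimate expression, $\epsilon\bigl(\Gamma(\tfrac12)\Gamma(\tfrac{n-1}{2})/\Gamma(\tfrac{n}{2})\bigr)^{-1}\tfrac{2}{n-1}$, simplifies to $\tfrac{1}{\sqrt{\pi}}\,\Gamma(\tfrac{n}{2})/\Gamma(\tfrac{n+1}{2})\,\epsilon$, and the extra $2$ appears only in its final (mis-simplified) equality. Your stated plan to ``identify $m_n$'' with the printed constant therefore cannot succeed; the honest conclusion of your argument is the corrected statement with $1/\sqrt{\pi}$. This does not damage the paper downstream: in Lemma~\ref{lem:P0} the inflated constant enters only an upper bound, which remains valid (merely loose by a factor of $2$), so the constant $C_2$ in Theorem~\ref{thm:embedmain} could in fact be tightened.
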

    \begin{proof}
        By spherical symmetry, we may
assume $\Delta = \x-\y = [\epsilon, 0,\dots, 0]$ for $\epsilon> 0$
without loss of generality.
Then $\norm{\x-\y} = \epsilon$ and $|\a^T(\x-\y)| = a(1) \epsilon = \epsilon\lvert\cos\theta\rvert$,
where $\cos^{-1}(a(1)) = \theta \in[0,\pi]$.
We will use the fact~\cite{gradshteyn2007}:
\[
    \int_0^{\frac\pi 2} \cos^{\mu-1}\theta \sin^{\omega-1}\theta
        \der\theta =
        \frac{1}{2}B \left( \frac{\mu}{2}, \frac{\omega}{2} \right)
        = \frac12 \frac{\Gamma(\mu/2)\Gamma(\omega/2)}
        {\Gamma((\mu+\omega)/2)}.
\]
Integrating $\lvert\cos\theta\rvert$ in the first spherical coordinate,
since the integrand is symmetric about $\frac{\pi}{2}$,
\[
    \int_0^\pi \lvert\cos\theta\rvert
    \sin^{n-2}\theta \der\theta
    = 2\int_0^{\pi/2} \cos\theta
    \sin^{n-2}\theta \der\theta
    = \frac{\Gamma(1)\Gamma(\frac{n-1}{2})}{\Gamma(1+\frac{n-1}{2})}
    = \frac{2}{n-1}.
\]
Then with the appropriate normalization, we have
(using $\Gamma(1/2) = \sqrt{\pi}$)
\begin{align*}
    \int_{S^{n-1}} |a^T(\x-\y)| \,\nu(\mathrm{d}a)
    &=
    \left(\int_0^\pi \sin^{n-2}\theta \der\theta\right)^{-1}
    \int_0^\pi \epsilon\lvert\cos\theta\rvert
    \sin^{n-2}\theta
    \der\theta
    \\&= \epsilon
    \left(\frac{\Gamma(\frac{1}{2})  \Gamma(\frac{n-1}{2})}
    {\Gamma(\frac{1}{2}+\frac{n-1}{2})}\right)^{-1}
    \kern-6pt\frac{2}{n-1}
    = \frac{2}{\sqrt\pi}
    \frac{\Gamma(\frac{n}{2})}{\Gamma(\frac{n+1}{2})}
    \norm{\x-\y}.
\end{align*}
~\vspace{-0.6in}\\\mbox{}
\end{proof}

\section{Integral calculations for Lemma~\ref{lem:gaussnoise1}}
\label{sec:noisedetails}

\paragraph{Bound \eqref{eq:Pqiqi}}
Recall that $r_i: = \a_i^T\x/\norm{\x} \in [-1,1]$, $q_i = r_i \norm{\x} - \tau_i$, and $\bar q_i = r_i \norm{\x} - \tau_i + z_i$.  Thus, if $f_r(r_i)$, $f_\tau(\tau_i)$, and $f_z(z_i)$ denote the probability density functions for $r_i$, $\tau_i$, and $z_i$, then since these random variables are independent we can write
\begin{align*}
\pr \left[ q_i < 0 ~~\text{and}~~ \bar q_i > 0 \right] & = \pr \left[ r_i \norm{\x} - \tau_i < 0 ~~\text{and}~~ r_i \norm{\x} - \tau_i  +z_i > 0 \right] \\
& = \int_{-1}^1 \int_{r_i\norm{\x}}^{\infty} \int_{-\infty}^{r_i \norm{\x} - \tau_i} f_r(r_i) f_\tau(\tau_i) f_z(z_i) \der z_i \der \tau_i \der r_i \\
& = \int_{-1}^1 \int_{r_i \norm{\x}}^{\infty} f_r(r_i) f_\tau(\tau_i) \pr \left[ z_i > \tau_i - r_i \norm{\x} \right] \der \tau_i \der r_i \\
& = \int_{-1}^1 \int_{r_i \norm{\x}}^{\infty} f_r(r_i) f_\tau(\tau_i) Q\left(\frac{\tau_i -r_i \norm{\x}}{\sigma_z}\right) \der \tau_i \der r_i,
\end{align*}
where $Q(x) = \frac{1}{\sqrt{2\pi}} \int_{x}^\infty \exp (-x^2/2) \der x$, i.e., the tail probability for the standard normal distribution. Via a similar argument we have
\begin{align*}
\pr \left[ q_i > 0 ~~\text{and}~~ \bar q_i < 0 \right] & = \pr \left[ r_i \norm{\x} - \tau_i > 0 ~~\text{and}~~ r_i \norm{\x} - \tau_i  +z_i < 0 \right] \\
& = \int_{-1}^1 \int_{-\infty}^{r_i \norm{\x}} \int_{r_i \norm{\x} - \tau_i}^{\infty} f_r(r_i) f_\tau(\tau_i) f_z(z_i) \der z_i \der \tau_i \der r_i \\
& = \int_{-1}^1 \int_{-\infty}^{r_i \norm{\x}} f_r(r_i) f_\tau(\tau_i) \pr \left[ z_i <  \tau_i - r_i \norm{\x}) \right] \der \tau_i \der r_i \\
& = \int_{-1}^1 \int_{-\infty}^{r_i \norm{\x}} f_r(r_i) f_\tau(\tau_i) Q\left(\frac{r_i \norm{\x} - \tau_i}{\sigma_z}\right) \der \tau_i \der r_i.
\end{align*}
Combining these we obtain
\begin{align*}
    \pr[q_i\bar q_i<0] & = \pr \left[ q_i < 0 ~~\text{and}~~ \bar q_i > 0 \right] + \pr \left[ q_i > 0 ~~\text{and}~~ \bar q_i < 0 \right] \\
    & = \int_{-1}^{1} \int_{-\infty}^\infty f_r(r_i) f_\tau(\tau_i) Q\left(\frac{|r_i \norm{\x} - \tau_i|}{\sigma_z}\right) \der \tau_i \der r_i \\
    & = 2 \int_0^1\int_{-\infty}^\infty f_r(r_i) f_\tau(\tau_i) Q\left(\frac{|r_i \norm{\x} - \tau_i|}{\sigma_z}\right) \der \tau_i \der r_i,
\end{align*}
following from the symmetry of $f_r(\cdot)$.
Using the bound $Q(x) \le \frac12 \exp(-x^2/2)$ \citep[see (13.48) of][]{JohnsKB_Continuous}, and recalling that $\tau_i \sim \cN(0,2R^2/n)$, we have that
\begin{equation*}
\pr[ q_i\bar q_i<0 ] \le \frac{1}{R} \sqrt{\frac{n}{\pi}} \int_{0}^1 \int_{-\infty}^\infty f_r(r_i) \exp\left(-\frac{(r_i \norm{\x}-\tau_i)^2}{2\sigma_z^2}-\frac{n \tau_i^2}{4R^2}\right) \der \tau_i \der r_i.
\end{equation*}

\subsection{Bounding $\kappa_n$}
First, we give an expression for $\kappa_n$ for all cases $n\ge 2$,
expanding upon that given in Theorem~\ref{thm:gaussnoise} and
Lemma~\ref{lem:gaussnoise1}.
We have
\[ \kappa_n(\sigma_z^2) :=
\begin{cases}
    \frac12 \sqrt{\frac{\sigma_z^2}{\sigma_z^2 + R^2}}
    & n = 2      \\
    \min \left\{ 
    \sqrt{\frac{\sigma_z^2}{
        \sigma_z^2+2R^2/3
    }} \, ,~
    \sqrt{\frac{\pi}{2}} \frac{\sigma_z}{\norm{\x}}
    \;\right\} & n =3     \\
    \sqrt{\frac{\sigma_z^2}{
        \sigma_z^2 + 2R^2/n + 4\norm{\x}^2/n
    }}
    & n \ge 4.
\end{cases}
\]
Below we derive this expression for the cases $n=2$, $n=3$, and $n\ge 4$.

\subsection{Case \texorpdfstring{$n=2$}{n=2}}
For the special case $n=2$, $d_i = \cos\theta_i$ where
$\theta_i\in[-\pi,\pi]$ is distributed uniformly.
In this case,~\eqref{eq:Pqiqi} can be re-written as
\begin{align*}
    \pr[ q_i\bar q_i<0 ] & \le \frac{1}{2R}\sqrt{\frac{2}{\pi}}
    \int_{-\pi/2}^{\pi/2} \int_{-\infty}^{\infty}
        {\frac{1}{2\pi} \exp\left(
            -\frac{( \norm{\x} \cos\theta_i-\tau_i)^2}{2\sigma_z^2}
            -\frac{\tau_i^2}{2R^2}
        \right)   }
        \der\tau_i \der \theta_i \\
        & = \frac{1}{\pi R}\sqrt{\frac{1}{2\pi}}
        \int_{0}^{\pi/2} \int_{-\infty}^{\infty}
        { \exp\left(
            -\frac{( \norm{\x} \cos\theta_i-\tau_i)^2}{2\sigma_z^2}
            -\frac{\tau_i^2}{2R^2}
        \right)   }
        \der\tau_i \der \theta_i.
\end{align*}
Expanding and setting $\alpha$, $\beta$, and $\gamma$ appropriately,
\[\bal
    \pr[ q_i\bar q_i<0 ] &\le
    \frac{1}{\pi R}\sqrt{\frac{1}{2\pi}}
    \int_{0}^{\pi/2}\int_{-\infty}^{\infty}
    \exp\left(-\frac{\norm{\x}^2 \cos^2\theta_i}{2\sigma_z^2}
            +\frac{2\norm{\x}\tau_i \cos\theta_i}{2\sigma_z^2}
            -\frac{\tau_i^2}{2\sigma_z^2}
            - \frac{\tau_i^2}{2R^2}
        \right)
        \der\tau_i \der \theta_i
    \\ &=
    \frac{1}{\pi R}\sqrt{\frac{1}{2\pi}}
    \int_{0}^{\pi/2}\int_{-\infty}^{\infty}
    \exp\left(
        -\gamma\cos^2\theta_i + \beta\tau_i \cos\theta_i - \alpha\tau_i^2
    \right)
    \der\tau_i\der \theta_i.
\eal\]
Completing the square for $\tau_i$,
\[\bal
    \pr[ q_i\bar q_i<0 ] &=
    \frac{1}{\pi R}\sqrt{\frac{1}{2\pi}}
    \int_{0}^{\pi/2}\int_{-\infty}^{\infty}
    \exp\left(-\alpha \left(\tau_i+ \frac{\beta\cos\theta_i}{2\alpha}\right)^2
        +\frac{(\beta \cos\theta_i)^2}{4\alpha} -\gamma\cos^2\theta_i
        \right)
        \der\tau_i \der\theta_i
    \\ &=
    \frac{1}{\pi R}\sqrt{\frac{1}{2\pi}}
    \int_{0}^{\pi/2} \sqrt{\frac{\pi}{\alpha}}
        \exp\left(-\left(\gamma-\frac{\beta^2}{4\alpha}\right) \cos^2\theta_i
        \right)
    \der\theta_i \\
       & =
    \frac{\pi}{2\pi R}\sqrt{\frac{1}{2\alpha}}
    \exp\left(- \frac12 \left(\gamma-\frac{\beta^2}{4\alpha}\right)
    \right)I_0\left(
        \frac12 \left(\gamma-\frac{\beta^2}{4\alpha}\right)
    \right),
\eal\]
where $I_0(\cdot)$ denotes the modified Bessel function of the first kind.
Since $\exp(-t)I_0(t) < 1$, by plugging back in for $\alpha$ we obtain
\[
    \pr[ q_i\bar q_i<0 ] \le
    \frac{1}{2R}\sqrt{\frac{1}{2\alpha}}
     =
     \frac{1}{2\sqrt{2}R}
    \sqrt{\frac{1}{
         \frac{1}{2\sigma_z^2}+\frac{1}{2R^2}
    }}
    =
    \half \sqrt{\frac{\sigma_z^2}{
        \sigma_z^2 + R^2
    }}.
\]
We also note that since since $\exp(-t)I_0(t) < 1/\sqrt{\pi t}$, we can obtain the bound $\pr[ q_i\bar q_i<0 ] \le \frac{1}{\sqrt{\pi}} \frac{\sigma_z}{\norm{\x}}$, but one can show that the previous bound will dominate this whenever $\norm{\x} \le R$.

\subsection{Case \texorpdfstring{$n=3$}{n=3}}
For the case $n=3$, $d_i\sim[-1,1]$ is itself distributed uniformly.
In this case we have
\[
    \pr [ q_i\bar q_i<0 ]
    \le \frac{1}{2 R}\sqrt{\frac{3}{\pi}}
    \int_{0}^{1} \int_{-\infty}^{\infty}
        { \exp\left(
            -\frac{(d_i \norm{\x}-\tau_i)^2}{2\sigma_z^2}
            -\frac{3 \tau_i^2}{4R^2}
        \right)   }
        \der\tau_i \der d_i.
\]
Expanding and setting $\alpha$, $\beta$, and $\gamma$
appropriately,
\[\bal
    \pr [ q_i\bar q_i<0 ] &\le
    \frac{1}{2R}\sqrt{\frac{3}{\pi}}
    \int_{0}^{1}\int_{-\infty}^{\infty}
    \exp\left(-\frac{d_i^2\norm{\x}^2}{2\sigma_z^2}
            +\frac{2d_i\norm{\x}\tau_i}{2\sigma_z^2}
            - \frac{\tau_i^2}{2\sigma_z^2}
            - \frac{3 \tau_i^2}{4R^2}
        \right)
        \der\tau_i \der d_i
    \\ &=
    \frac{1}{2R}\sqrt{\frac{3}{\pi}}
    \int_{0}^{1}\int_{-\infty}^{\infty}
    \exp\left(
        -\gamma d_i^2 + \beta d_i\tau_i - \alpha\tau_i^2
    \right)
    \der\tau_i\der d_i.
\eal\]
Completing the square for $\tau_i$,
\[\bal
    \pr [ q_i\bar q_i<0 ] &=
    \frac{1}{2R}\sqrt{\frac{3}{\pi}}
    \int_{0}^{1}\int_{-\infty}^{\infty}
    \exp\left(-\alpha \left(\tau_i+\frac{d_i \beta}{2\alpha}\right)^2
        +\frac{(d_i\beta)^2}{4\alpha} -\gamma d_i
        \right)
        \der\tau_i \der d_i
\\ &=
    \frac{1}{2R}\sqrt{\frac{3}{\pi}}
    \int_{0}^{1} \sqrt{\frac{\pi}{\alpha}}
        \exp\left(-d_i^2 \left(\gamma-\frac{\beta^2}{4\alpha}\right)
        \right)
    \der d_i
\\ &=
    \frac{1}{2R}\sqrt{\frac{3 }{\alpha}} \frac{\sqrt{\pi}}{2}
    \frac{
        \operatorname{erf}\left(\sqrt{\gamma-\beta^2/4\alpha}\right)
    }{\sqrt{\gamma-\beta^2/4\alpha}}.
\eal\]
Since $\operatorname{erf}(t)/t\le 2/\sqrt{\pi}$, by plugging back in for $\alpha$ we obtain
\[
    \pr [ q_i\bar q_i<0 ] \le
    \frac{1}{2R}\sqrt{\frac{3}{
        \frac{1}{2\sigma_z^2}+\frac{3}{4R^2}
    }}
    =
    \sqrt{\frac{\sigma_z^2}{
        \sigma_z^2 + 2R^2/3
    }}.
\]
Additionally, since
$\operatorname{erf}(t)\le 1$,
\begin{align*}
    \pr[ q_i\bar q_i<0 ] &\le
    \frac{\sqrt{3 \pi}}{4R}
    \left(\gamma\alpha-\beta^2/4 \right)^{-1/2} \\
    & = \frac{\sqrt{3 \pi}}{4 R}
    \left( \frac{\norm{\x}^2}{2\sigma_z^2}
        \left( \frac{1}{2\sigma_z^2} + \frac{3}{4R^2} \right)
        - \frac{\norm{\x}^2}{4 \sigma_z^4}
    \right)^{-1/2}
    \\ &= \frac{\sqrt{3 \pi}}{4R} \left( \frac{3 \norm{\x}^2}{8 \sigma_z^2 R^2} \right)^{-1/2} \\
    & = \sqrt{\frac{\pi}{2}} \frac{\sigma_z}{\norm{\x}},
\end{align*}
which can be tighter when $\sigma_z$ is small and $\norm{\x}$ is large.

\subsection{Case \texorpdfstring{$n \ge 4$}{n greater than 3}}
Combining~\eqref{eq:Pqiqi} with our upper bound \eqref{eq:ubfd} on
$f_d(d_i)$, we obtain
\[
    \pr[ q_i\bar q_i<0 ]
    \le \frac{n}{4 \sqrt{2} \pi R}
    \int_{0}^{1} \int_{-\infty}^{\infty}
        { \exp\left(
            -\frac{(d_i \norm{\x}-\tau_i)^2}{2\sigma_z^2}
            -\frac{n\tau_i^2}{4R^2} - \frac{n d_i^2}{8}
        \right)   }
        \der\tau_i \der d_i.
\]
Expanding and setting $\alpha$, $\beta$, and $\gamma$
appropriately,
\[\bal
    \pr [ q_i\bar q_i<0 ] &\le
     \frac{n}{4 \sqrt{2} \pi R}
    \int_{0}^{1}\!\int_{-\infty}^{\infty}
    \exp\left(-d_i^2
        \left(
            \frac{\norm{\x}^2}{2\sigma_z^2}
            + \frac{n}{8}
        \right)
            +\frac{2d_i\norm{\x}\tau_i}{2\sigma_z^2}
            -\frac{\tau_i^2}{2\sigma_z^2}
            -\frac{n\tau_i^2}{4R^2}
        \right)
        \der\tau_i \der d_i
    \\ &=
    \frac{n}{4 \sqrt{2} \pi R}
    \int_{0}^{1}\int_{-\infty}^{\infty}
    \exp\biggl(
        -\gamma d_i^2 + \beta d_i\tau_i - \alpha\tau_i^2
    \biggr)
    \der\tau_i\der d_i.
\eal\]
Completing the square for $\tau_i$,
\begin{align*}
    \pr[ q_i\bar q_i<0 ] &=
    \frac{n}{4\sqrt{2} \pi R}
    \int_{0}^{1}\int_{-\infty}^{\infty}
    \exp\left(-\alpha \left(\tau_i-\frac{d_i\beta}{2\alpha}\right)^2
        +\frac{(d_i\beta)^2}{4\alpha} -\gamma d_i^2
        \right)
        \der\tau_i \der d_i
\\ &=
    \frac{n}{4\sqrt{2} \pi R}
    \int_{0}^{1} \sqrt{\frac{\pi}{\alpha}}
        \exp\left(-d_i^2 \left(\gamma-\frac{\beta^2}{4\alpha}\right)
        \right)
    \der d_i
\\ &=
    \frac{n}{4\sqrt{2\pi\alpha}R}
    \frac{\sqrt\pi}{2}
    \frac{
        \operatorname{erf}\left(\sqrt{\gamma-\beta^2/4\alpha}\right)
    }{\sqrt{\gamma-\beta^2/4\alpha}}.
\end{align*}
Since $\operatorname{erf}(t) \le 1$, we have
\begin{align*}
    \pr[ q_i\bar q_i<0 ] &
\le
    \frac{n}{4\sqrt{2}R}
    \left(\gamma\alpha-\beta^2/4\right)^{-1/2}
\\ &=
    \frac{n}{4\sqrt{2}R}
    \left(
        \left(\frac{\norm{\x}^2}{2\sigma_z^2} + \frac{n}{8}\right)
        \left(
            \frac{1}{2\sigma_z^2} + \frac{n}{4R^2}
        \right)
        -
        \frac{\norm{\x}^2}{4 \sigma_z^4}
    \right)^{-1/2}
    \\ &=  \half
    \left( \frac{32 R^2}{n^2}
    \left(\frac{\norm{\x}^2 n}{8 \sigma_z^2 R^2} + \frac{n}{16 \sigma_z^2} + \frac{n^2}{32 R^2}
    \right) \right)^{-1/2}
    \\ &= \half
    \sqrt{\frac{\sigma_z^2}{\sigma_z^2 + 2R^2/n + 4\norm{\x}^2/n   }}.
\end{align*}
We also note that since $\operatorname{erf}(t)/t\le 2/\sqrt{\pi}$, it is also possible to obtain the bound
\[
\pr[ q_i\bar q_i<0 ] \le \half\sqrt{ \frac{n}{2\pi} } \sqrt{\frac{\sigma_z^2}{\sigma_z^2 + 2 R^2/n}}.
\]
However, this bound can only be tighter when $\norm{\x}$ is small and when $\frac{n}{2\pi} < 1$ (i.e., for $n \le 6$).  Given this narrow range of applicability, we omit this from the formal statement of the result.

 
\end{document}